\documentclass[11pt]{article}

\oddsidemargin 0in    %   Note \oddsidemargin = \evensidemargin
\evensidemargin 0in
\topmargin -0.5in
\textheight 8.5 true in       % Height of text (including footnotes & figures)
\textwidth 6.5 true in        % Width of text line.
\date{}
\usepackage{times}

\usepackage[utf8]{inputenc} % allow utf-8 input
\usepackage[T1]{fontenc}    % use 8-bit T1 fonts
\usepackage{hyperref}       % hyperlinks
\usepackage{url}            % simple URL typesetting
\usepackage{booktabs}       % professional-quality tables
\usepackage{amsfonts}       % blackboard math symbols
\usepackage{nicefrac}       % compact symbols for 1/2, etc.
\usepackage{microtype}      % microtypography
\usepackage{xspace}

\usepackage{comment}

\usepackage{rotating}

\usepackage{amssymb}
\usepackage{autobreak}
\usepackage{mathtools}
\usepackage{xcolor}

\usepackage{algorithm}
\usepackage[noend]{algpseudocode}

\usepackage[numbers, compress]{natbib}

\usepackage{amsfonts}

\usepackage{times}
\usepackage{xcolor}     
\usepackage{caption}
\usepackage{mathtools}
\usepackage{enumitem}
\usepackage{bm}
% \usepackage{algorithm}
% \usepackage[noend]{algorithmic}

% Recommended, but optional, packages for figures and better typesetting:
\usepackage{microtype}
\usepackage{graphicx}

% hyperref makes hyperlinks in the resulting PDF.
% If your build breaks (sometimes temporarily if a hyperlink spans a page)
% please comment out the following usepackage line and replace
% \usepackage{icml2023} with \usepackage[nohyperref]{icml2023} above.
\usepackage{hyperref}

% % Attempt to make hyperref and algorithmic work together better:
% \newcommand{\theHalgorithm}{\arabic{algorithm}}

% For theorems and such
\usepackage{amsmath}
\usepackage{amssymb}
\usepackage{mathtools}
\usepackage{amsthm}

\usepackage{xfrac}
%\pagenumbering{arabic}

% if you use cleveref..
\usepackage[capitalize,noabbrev]{cleveref}
\usepackage{xspace}
\usepackage{multirow}
\usepackage{enumitem}
\usepackage{xfrac}
\newcommand{\method}{CVGP\xspace}
\newcommand{\cvexpr}{\texttt{CVExp}}

%%%%%%%%%%%%%%%%%%%%%%%%%%%%%%%%
% THEOREMS
%%%%%%%%%%%%%%%%%%%%%%%%%%%%%%%%
\theoremstyle{plain}
\newtheorem{theorem}{Theorem}[section]
\newtheorem{proposition}[theorem]{Proposition}
\newtheorem{lemma}[theorem]{Lemma}

\theoremstyle{definition}
\newtheorem{definition}[theorem]{Definition}

\theoremstyle{remark}

\usepackage{xspace}

\setlength{\textfloatsep}{4pt}% Remove \textfloatsep

\title{Symbolic Regression via Control Variable Genetic Programming}
\author{Nan Jiang, Yexiang Xue\\
Department of Computer Science, Purdue University \\
\texttt{\{jiang631,yexiang\}@purdue.edu}}

\begin{document}

\maketitle

\begin{abstract}

Learning symbolic expressions  directly from experiment data  is a vital step in AI-driven scientific discovery.
Nevertheless, state-of-the-art approaches are limited to learning simple expressions. 
Regressing expressions involving many independent variables still remain out of reach. 
Motivated by the control variable experiments widely utilized in science, we propose \underline{\textbf{C}}ontrol \underline{\textbf{V}}ariable \underline{\textbf{G}}enetic \underline{\textbf{P}}rogramming (\method) for  
 symbolic regression over many independent variables. 
 \method~expedites symbolic expression discovery via customized experiment design, rather than learning from a fixed dataset collected a priori. 
 \method~starts by fitting simple expressions involving a small set of independent variables using genetic programming, under controlled experiments where other variables are held as constants. 
 It then extends expressions learned in previous generations by adding new independent variables, using new control variable experiments in which these variables are allowed to vary. 
Theoretically, we show \method~as an incremental building approach can yield an exponential reduction in the search space when learning a class of expressions. 
Experimentally, \method~outperforms several baselines in learning symbolic expressions involving multiple independent variables. 
\end{abstract}

\section{Introduction}

Discovering scientific laws automatically from experiment data has been a grand goal of Artificial Intelligence (AI). Its success will greatly accelerate the pace of scientific discovery. 
Symbolic regression, \textit{i.e.}, learning symbolic expressions from data, consists of a vital 
step in realizing this grand goal. 
Recently, exciting progress~\cite{doi:10.1126/science.1165893,DBLP:conf/gecco/VirgolinAB19,guimera2020bayesian,DBLP:conf/iclr/PetersenLMSKK21,DBLP:conf/nips/MundhenkLGSFP21,DBLP:conf/iclr/PetersenLMSKK21,DBLP:journals/corr/abs-2205-11107,DBLP:journals/corr/abs-2203-08808,DBLP:conf/gecco/HeLYLW22} has been made in this domain, especially with the aid of deep neural networks. 
Despite great achievements, state-of-the-art approaches are limited to learning relatively 
simple expressions, often involving a small set of variables.
Regressing symbolic expressions involving many independent variables still remains out of reach of current approaches. 
The difficulty mainly lies in the exponentially large search space of symbolic expressions.

Our work attacks this major gap of symbolic regression, leveraging control variable experimentation -- a classic procedure widely implemented in the science 
community~\cite{lehman2004designing,DBLP:books/daglib/0022270}. 
In the analysis of complex scientific phenomena involving many contributing factors, control variable experiments are conducted where a set of factors are held constant (\textit{i.e.}, controlled variables), and the dependence between the output variable and the remaining input variables is studied~\cite{DBLP:journals/ml/Langley88,Kibler1991TheES}.
The result is a reduced-form expression that models the dependence only among the output and the non-controlled variables. 
Once the reduced-form equation is validated, scientists introduce more variables
into play by freeing a few controlled variables in previous experiments. 
The new goal is to extend the previous equation to a general one including the newly introduced variables. 
This process continues until all independent variables are introduced into the model.

Our proposed \underline{\textbf{C}}ontrol \underline{\textbf{V}}ariable \underline{\textbf{G}}enetic \underline{\textbf{P}}rogramming (\method) approach implements the aforementioned scientific discovery process using Genetic Programming (GP) for symbolic regression over many independent variables. 
The key insight of \method~is to learn from \textit{a customized set of control variable experiments}; in other words, the experiment data collection adapts to the learning process. 
This is in contrast to the current learning paradigm of most symbolic regression approaches, where they learn from a fixed dataset collected a priori. 
In \method, first, we hold all independent variables except for one as constants and learn a symbolic expression that maps the single variable to the dependent variable using GP. 
GP maintains a pool of candidate equations and improves the fitness of these equations
via mating, mutating, and selection over several generations. 
Mapping the dependence of one independent variable is easy. Hence GP can usually 
recover the ground-truth reduced-form equation. 
Then, \method~frees one  independent variable at a time. In each iteration, GP is used to modify the equations learned in previous generations to incorporate the new independent variable. 
This step is again conducted via mating, mutating, and selection.
Such a procedure repeats until all the independent variables have been incorporated into the symbolic expression. 
After discovering \method~independently,  the authors learned in private communications a line of research work \cite{DBLP:conf/ijcai/Langley77,DBLP:conf/ijcai/Langley79,DBLP:conf/ijcai/LangleyBS81,king2004functional,king2009autosci,cerrato2023rlsci} that also implemented the human scientific discovery process using AI, pioneered by the BACON systems developed by Langley, P. in 1978-1981~\cite{DBLP:conf/ijcai/Langley77,DBLP:conf/ijcai/Langley79,DBLP:conf/ijcai/LangleyBS81}. %\xyx{cite}. 
While BACON's discovery was driven by rule-based engines and our \method~uses modern machine learning approaches such as genetic programming, indeed both approaches share a common vision – the \textit{integration of experiment design and model learning} can further expedite scientific discovery. 

Theoretically, we show \method as an incremental builder can reduce the exponential-sized search space for candidate  expressions into a polynomial one when fitting a class of 
symbolic expressions. 
Experimentally, we show \method~outperforms a number of state-of-the-art approaches on symbolic regression over multiple independent variables. 
Our contributions can be summarized as:
\begin{itemize}[align=left, leftmargin=0pt, labelwidth=0pt, itemindent=!]
    \item We propose \method, an incremental builder for symbolic regression over many independent variables.  \method~fits increasingly more complex equations via conducting control variable experiments with fewer and fewer controlled variables.
    \item Theoretically, we show such an incremental builder as \method~can reduce exponential-sized search spaces for symbolic regression to polynomial ones when searching for a class of symbolic expressions. 
    \item Empirically, we demonstrate \method~outperforms state-of-the-art symbolic regression approaches in discovering multi-variable equations from data. 
\end{itemize}

\section{Preliminaries}

\noindent\textbf{Symbolic Expression.}
A symbolic expression $\phi$ is expressed as variables and constants connected by a set of operators. Variables are allowed to change across different contexts, while constants remain the same. Each operator has a predetermined arity, \textit{i.e.}, the number of operands taken by the operator. 
Each operand of an operator is either a variable, a constant, or a self-contained symbolic expression. 
A symbolic expression can also be drawn as a tree, where variables and constants reside in leaves, and operators reside in inner nodes. See Figure \ref{fig:reduced}(a) for an example. 
In this paper, we deal with symbolic expressions involving  real numbers. The semantic meaning of a symbolic expression follows its standard definition in arithmetics and thus is omitted. 

\noindent\textbf{Symbolic Regression.} 
Given a dataset $\{(\mathbf{x}_i, y_i)\}_{i=1}^n$ and a loss function $\ell(\cdot,\cdot)$, where  $\mathbf{x}_i\in \mathbb{R}^m$ and $y_i\in \mathbb{R}$, the objective of symbolic regression (SR) is to search for the optimal symbolic expression $\phi^*$ within the space of all candidate expressions $\Pi$ that minimizes the average loss: 
\begin{equation}
\phi^*=\arg\min_{\phi\in \Pi}\;\frac{1}{n} \sum_{i=1}^n \ell(\phi(\mathbf{x}_i), y_i),
\end{equation}
in addition to regularization terms. 
Symbolic regression is challenging and is shown to be NP-hard~\cite{journal/tmlr/virgolin2022}, due to the exponentially large space of candidate symbolic expressions. 

\noindent\textbf{Genetic Programming for Symbolic Regression.}
Genetic Programming (GP) has been a popular method to solve symbolic regression. Recently, a few other approaches based on  neural networks surpassed the performance of GP in symbolic regression. We leave the discussions of these methods 
to the related work section. 
The high-level idea of GP  is to maintain a pool of candidate symbolic expressions. In each generation, candidate expressions are \textit{mutated} with probability $P_{mu}$ and \textit{mated} with probability $P_{ma}$. Then in the \textit{selection} step, 
those with the highest fitness scores, measured by how each expression predicts
the output from the input, are selected as the candidates for the next generation, together with a few randomly chosen ones to maintain diversity.
After several generations, expressions with high fitness scores, \textit{i.e.}, those fit data well survive in the pool of candidate  solutions. 
The best expressions found in all generations are recorded as {hall-of-fame} solutions.

\section{Control Variable Genetic Programming}
 In this section, we present our control variable genetic programming algorithm.
% %
 Before we dive into the algorithm description, 
 we first need to study what are the outcomes of a control variable experiment and 
what conclusions we can draw on the symbolic regression 
 expression by observing such outcomes. 

\subsection{Control Variable Experiment}
\begin{figure*}[!t]
    \centering
    \includegraphics[width=\textwidth]{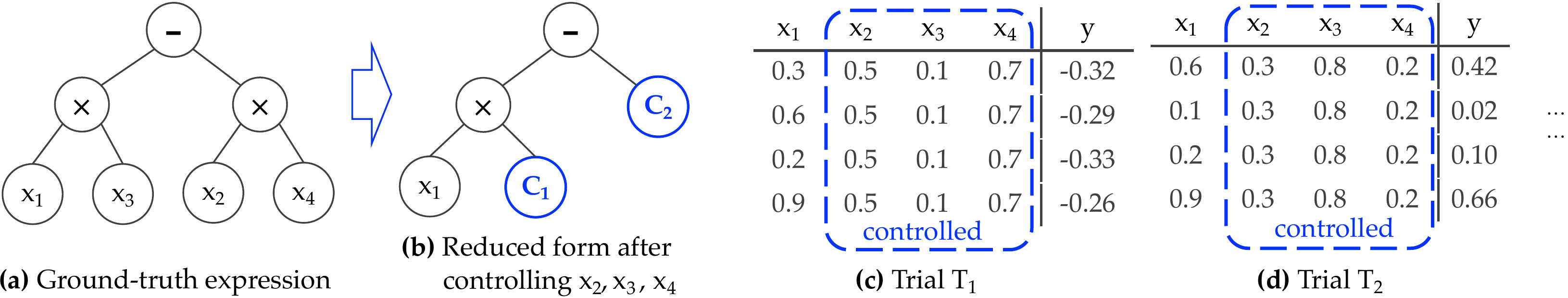}
    \caption{An example of two trials of a control variable experiment. 
    \textbf{(a)} The data of the experiment is generated by the ground-truth 
    expression $\phi=x_1x_3-x_2x_4$. \textbf{(b)} If we control
    $\mathbf{v}_c=\{x_2, x_3, x_4\}$ and only allow $\mathbf{v}_f=\{x_1\}$ to vary, it \textit{looks like}  
     the data are generated from the reduced-form equation $\phi'=C_1x_1-C_2$. \textbf{(c, d)} The generated data in two trials of the control variable experiments.
    The controlled variables are fixed within each trial but vary across trials.}
    \label{fig:reduced}
\end{figure*}

A control variable experiment $\cvexpr(\phi, \mathbf{v}_c, \mathbf{v}_f, $ $\{T_k\}_{k=1}^K)$
consists of the trial symbolic expression $\phi$, 
a set of controlled variables $\mathbf{v}_c$, 
a set of free variables $\mathbf{v}_f$, and $K$ trial experiments $T_1, \ldots, T_K$. 
The expression $\phi$ may have zero or multiple \textit{open constants}. 
The value of an open constant is determined by fitting the equation to the training data.

\smallskip
\noindent\textbf{One Trial in a Control Variable Experiment.} 
A single trial of a control variable experiment $T_k$ fits the symbolic expression $\phi$ with a batch of data. 
To avoid abusing notations, we also use $T_k$ to denote the batch of data.
In the generated data $T_k$, every controlled variable is fixed to the same value while the free variables are set randomly. 
We assume that the values of the dependent variables in a batch are (noisy observations) of the ground-truth expressions with the values of independent variables set in the batch. 
In science, this step is achieved by conducting real-world  experiments, \textit{i.e.}, controlling independent variables, and performing measurements on the dependent variable. 
For example, Fig.~\ref{fig:reduced}(c,d) demonstrates two trials of a control variable experiment in which variable $x_2, x_3, x_4$ are controlled, \textit{i.e.}, $\mathbf{v}_c=\{x_2,x_3,x_4\}$. 
They are fixed to one value in trial $T_1$ (in Fig.~\ref{fig:reduced}(c)) and another value in trial $T_2$ (in Fig.~\ref{fig:reduced}(d)).
$x_1$ is the only free variable, \textit{i.e.}, $\mathbf{v}_f=\{x_1\}$. The value of $x_1$ is varied in trials $T_1, T_2$. 

\smallskip
\noindent\textbf{Reduced-form  Expression in a Control Variable Setting.} 
We assume there is a ground-truth symbolic expression that produces the experiment data. 
In other words, the observed output is the execution of the ground-truth expression from the input, possibly in addition to some noise.
In control variable experiments, 
because the values of controlled variables are 
fixed in each trial, 
what we observe is the ground-truth expression in its \textit{reduced form}, where sub-expressions 
involving only controlled variables are replaced
with constants. 
Fig.~\ref{fig:reduced}(b) provides an example of the reduced form expression. 
Assume the data is
generated from the ground-truth expression in (a): $\phi=x_1 x_3 - x_2 x_4$.
When we control the values of variable in $\mathbf{v}_c=\{x_2, x_3, x_4\}$, the data  \textit{looks like} they are generated from the \textit{reduced} expression: $\phi'=C_1 x_1 - C_2$. 
Here $C_1$ replaces the controlled variable $x_3$, and $C_2$ replaces a sub-expression $x_2 x_4$ in the ground-truth expression. 
We can see both $C_1$ and $C_2$ hold constant values in each trial. 
However, their values vary across trials because the values of controlled variables change.
Back to the example, in trial $T_1$, when $x_2$, $x_3$, and $x_4$ are fixed to 0.5, 0.1, 0.7,  $C_1$ takes the  value of $x_3$, \textit{i.e.}, 0.1, while $C_2$ takes the value of $x_2 x_4$, \textit{i.e.}, 0.35. In trial $T_2$, $C_1=0.8$ and  $C_2=0.06$.

We call constants which represent sub-expressions involving controlled variables in the ground-truth expression \textit{summary constants}, and refer to 
constants in the ground-truth expression \textit{stand-alone constants}. 
For example, $C_1$ and $C_2$ in Fig.~\ref{fig:reduced}(b) are both summary constants. 
Notice that the types of constants are \textit{unknown} in the process of fitting an expression to control variable experiment data. 
However, the best-fitted values of these constants across several trials reveal important information:
a constant is probably a summary constant if its fitted values vary across trials, while a constant that remains the same value across trials is probably stand-alone.

\smallskip
\noindent\textbf{Outcome of a Single Trial.} 
The outcomes of the $k$-th trial are two-fold: (1) the values of the open constants which best fit the given batch of data. We denote these values as vector $\mathbf{c}_k$. (2) the fitness score measuring the goodness-of-fit, denoted as $o_k$.
One typical fitness score is the Negative normalized root mean squared error (NRMSE).
In the example in Fig.~\ref{fig:reduced}, if we fit the reduced expression in (b) to data in trial $T_1$, the best-fitted values are $\mathbf{c}_1=(C_1= 0.1, C_2= 0.35)$. For trial $T_2$, the best-fitted values are  $\mathbf{c}_2=(C_1=0.8, C_2=0.06)$. 
In both trials, the fitness scores (\textit{i.e.}, the NRMSE value) are $0$, indicating no errors.

\smallskip
\noindent\textbf{Outcome of Multiple trials.} 
We let the values of control variables vary across different trials.
This corresponds to changing experimental conditions in real science experiments.
The outcomes of an experiment with $K$ trials are:  (1) $\phi.\mathbf{o}=(o_1, \ldots, o_K)$, where each $o_k$ is the fitness score of trial $k$ and (2) $\phi.\mathbf{c}=(\mathbf{c}_1, \ldots, \mathbf{c}_K)$, the best-fitted values to open constants across trials.

Critical information is obtained by examining the outcomes of a multi-trial control variable experiment. 
First, consistent close-to-zero fitness scores $o_1, \ldots, o_K$ suggest the fitted expression is close to the ground-truth equation in the reduced form. 
Second, given the equation is close to the ground truth, an open constant having similar best-fitted values across $K$ trials $\mathbf{c}_1, \ldots, \mathbf{c}_K$ suggests the constant is stand-alone. Otherwise, it is probably a summary constant.

\subsection{Control Variable Genetic Programming}

The high-level idea of the \method~algorithm is to build more complex symbolic expressions involving more and more variables based on control variable experiments with fewer and fewer controlled variables.

\begin{algorithm}[!t]
   \caption{Control Variable Genetic Programming (CVGP)}\label{alg:cvgp}  
   \begin{algorithmic}[1]
   \Require{GP Pool size $M$; \#generations \texttt{\#Gen}; \#trials $K$; \#expressions in hall-of-fame set \texttt{\#Hof}; mutate probability $P_{mu}$; mate probability $P_{ma}$; operand set $O_p$}.
   \State $\mathbf{v}_c \gets \{x_1, \ldots, x_m\};  \qquad \mathbf{v}_f \gets \emptyset$.
   \State $\mathcal{P}_{gp} \gets \texttt{CreateInitGPPool}(M)$.
   \For{$x_i \in \{x_1, \ldots, x_m\}$ }
        \State $\mathbf{v}_c \gets \mathbf{v}_c \setminus \{x_i\}; \quad \mathbf{v}_f \gets \mathbf{v}_f \cup \{x_i\}$. \Comment{Move $x_i$ from controlled to free variables}
        \State ${\mathcal{D}^o_i} \gets \texttt{DataOracle}(\mathbf{v}_c, \mathbf{v}_f)$.
        \For{$\phi \in \mathcal{P}_{gp} $}
            \State $\{T_k\}_{k=1}^K\gets\texttt{GenData}(\mathcal{D}^o_i)$. \Comment{Query oracle for the trial data}
            
            \State $\phi.\mathbf{o}, \phi.\mathbf{c} \gets\cvexpr(\phi, \mathbf{v}_c, \mathbf{v}_f, \{T_k\}_{k=1}^K)$. \Comment{{Control variable experiments}}
        \EndFor 
    
        \State $\mathcal{P}_{gp}, \mathcal{H} \gets \texttt{GP}(\mathcal{P}_{gp}, \mathcal{D}_i^o, K, M, \texttt{\#Gen}, \texttt{\#Hof}, P_{mu}, P_{ma}, O_p \cup \{\mbox{const}, x_i\})$.

        \For{$\phi \in \mathcal{P}_{gp}$}
            \State \texttt{FreezeEquation}($\phi, \phi.\mathbf{o}$, $\phi.\mathbf{c}$).
        \EndFor        
   \EndFor
    \Return The set of hall-of-fame equations  $\mathcal{H}$.
\end{algorithmic}
\end{algorithm}

To fit an expression of $m$ variables, initially, we control the values of all $m-1$ variables and allow only one variable to vary.
Using Genetic Programming (GP), we find a pool of expressions $\{\phi_{1,1}, \ldots, \phi_{1,M}\}$ which best fit the data from this controlled experiment. 
% $\{\phi_{1,1}, \ldots, \phi_{1,M}\}$ 
Notice 
$\phi_{1,1}, \ldots, \phi_{1,M}$ are restricted 
to contain the only one free variable. This fact 
renders fitting them a lot easier than fitting the expressions involving all $m$ variables. 
Next, for each $\phi_{1,l}$, we examine:
\begin{enumerate}[align=left, leftmargin=0pt, labelwidth=0pt, itemindent=!]
    \item if the errors of the fitting are consistently small across all trials. A small error 
implies $\phi_{1,l}$ is close to the ground-truth formula reduced to the one free variable. 
We hence freeze all operands of $\phi_{1,l}$ in this case. Freezing means GP
 in later steps cannot change these operands. 
 \item In the case of a small fitting error, 
we also inspect  the best-fitted values of each open constant in $\phi_{1,l}$ across different trials. 
The constant probably is a summary constant if its values vary across trials.
In other words, these constants represent sub-expressions involving the controlled variables. 
We thus mark these constants as expandable for later steps. The remaining constants are probably stand-alone. Therefore we also freeze them.
\end{enumerate}

After the first step, CVGP adds a second free variable and starts fitting $\{\phi_{2, 1}, \ldots, \phi_{2,M}\}$ using the data from 
 control variable experiments involving the 
 two free variables. 
 Similar to the previous step, all $\phi_{2,l}$ are restricted to only contain the two 
 free variables. 
 Moreover, they can only be mated or mutated by
 GP from the first generation $\{\phi_{1, 1}, \ldots, \phi_{1,M}\}$. 
 The mutation can only happen on non-frozen nodes. 
 After GP, a similar inspection is conducted for every equation in the GP pool, and corresponding variables and/or operands are frozen. 
 This process continues to involve more and more variables. Eventually, the expressions in the GP pool consider all $m$ variables. 

 %\xyx{after this becomes rough}
 The whole procedure of CVGP is shown in Algorithm~\ref{alg:cvgp}. Here, $x_1, \ldots, x_m$ are moved
 from the controlled to free variables in numerical order. 
 We agree other orders may boost its performance even further. However, we leave the exploration of this direction as future work. 
 When a new variable becomes free, the control 
 variable experiment $\cvexpr$ needs to be repeated
 for every equation $\phi$ in the GP pool $\mathcal{P}_{gp}$ (Line 5-9 in Algorithm~\ref{alg:cvgp}). 
 This is because the fitness scores and the fitted variable values will both change when the set of controlled variables is updated. 
 Then function {\texttt{GP}} is called. {\texttt{GP}} is a minimally modified genetic programming algorithm for symbolic regression whose pseudo-code is in Algorithm~\ref{alg:gp}. 
The only differences are that it uses data from control variable experiments and the mutation operation at step $i$ only allows to use all the operands, the constant node, and variable $x_i$ at non-frozen nodes.
 Finally, in Lines 12-14 of Algorithm~\ref{alg:cvgp}, $\texttt{FreezeEquation}$ is called for every equation
 in the GP pool. The high-level idea of freezing is discussed above. 
$\mathcal{H}$ is returned as the set of ``hall of fame'' expressions. % that best fit the data. %They are  returned as the outcome. 

\begin{figure*}[!t]
    \centering
    \includegraphics[width=1.\textwidth]{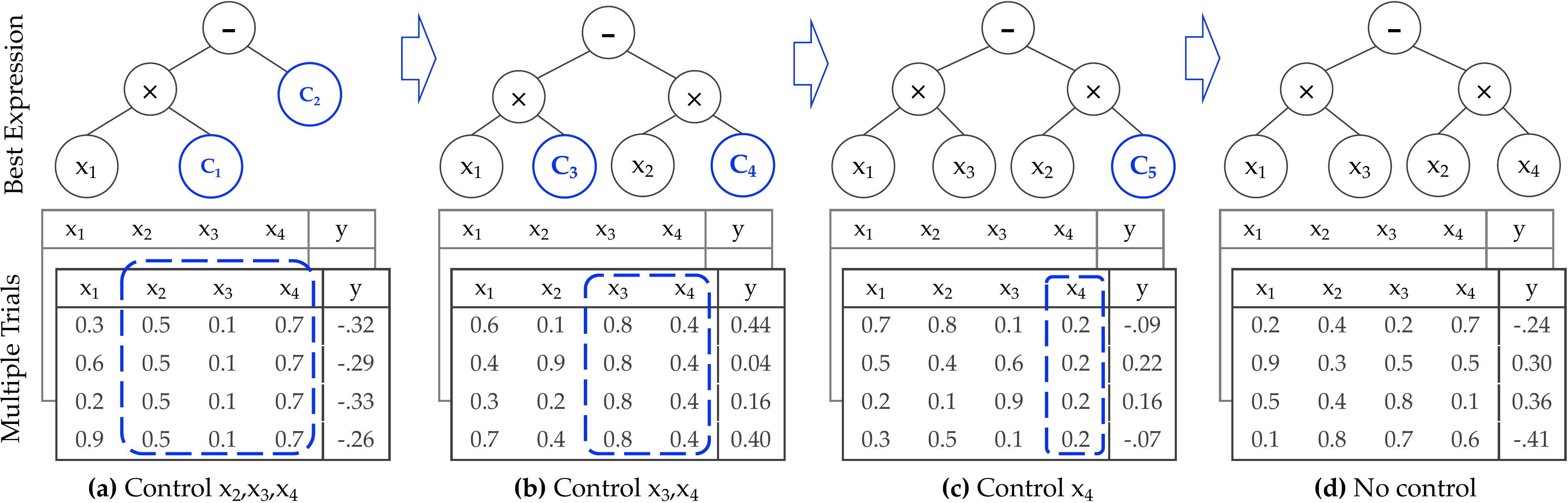}
    \caption{Running example of Algorithm~\ref{alg:cvgp}. 
    \textbf{(a)} Initially, a reduced-form equation $\phi'=C_1 x_1 - C_2$ is found via fitting control variable data in which  $x_2, x_3, x_4$ are held as constants and only  $x_1$ is allowed to vary.  Two leaves nodes $C_1,C_2$ are as summary constants (colored blue).
    \textbf{(b)} This equation is expanded to $C_3 x_1 - C_4 x_2$ in the second stage via fitting the data in which only $x_3, x_4$ are held as constants. 
    \textbf{(c,d)} This process continues until the ground-truth equation $\phi=x_1 x_3 - x_2 x_4$ is found. The data generated for control variable experiment trials in each stage are shown at the bottom. }\label{fig:cvgp}
\end{figure*}

\begin{algorithm}[!t]
   \caption{ {\small\texttt{GP}($\mathcal{P}_{gp}, \mathcal{D}^o, K, M, \texttt{\#Gen}$, $\texttt{\#Hof}$, $P_{mu}$, $P_{ma}$, $O_p$)}}
   \label{alg:gp}
   \begin{algorithmic}[1]
   \Require{Initial GP Pool $\mathcal{P}_{gp}$; Data oracle $\mathcal{D}^o$; \#trials $K$; GP pool size $M$; \#generations \texttt{\#Gen}; \#expressions in hall-of-fame set \texttt{\#Hof}; mutate probability $P_{mu}$; mate probability $P_{ma}$; mutation node library $O_p$}. 
   \State $\mathcal{H} \gets \text{\texttt{TopK}}(\mathcal{P}_{gp}, K=\texttt{\#Hof})$;

   \For{$j \gets 1 \textit{ to } \texttt{\#Gen} $}
   \State $\mathcal{P}_{new} \gets \emptyset$;
   \For{$\phi \in \mathcal{P}_{gp}$}
    \If{with probability $P_{mu}$}\Comment{{Mutation}}
        \State $\phi \gets \texttt{Mutate}(\phi, O_p)$;
        \State $\{T_k\}_{k=1}^K\gets\texttt{GenData}(\mathcal{D}^o)$;
       \State $\phi.\mathbf{o}$, $\phi.\mathbf{c}$ $\gets \cvexpr(\phi, \mathbf{v}_c, \mathbf{v}_f, \{T_k\}_{k=1}^K)$;
    \EndIf
    \State $\mathcal{P}_{new}  \gets \mathcal{P}_{new}  \cup \{\phi\}$;
   \EndFor
   \State $\mathcal{P}_{gp}\leftarrow   \mathcal{P}_{new}$; $\mathcal{P}_{new}\leftarrow \emptyset$;
   \For{$\phi_{l}, \phi_{l+1} \in \mathcal{P}_{gp}$}
    \If{with probability $P_{ma}$} \Comment{{Mating}}
    \State ${\phi_l, \phi_{l+1}} \gets \texttt{Mate}(\phi_l, \phi_{l+1})$;
    \State $\{T_k\}_{k=1}^K\gets\texttt{genData}(\mathcal{D}^o)$;
    \State ${\phi_l}.\mathbf{o}$, ${\phi_l}.\mathbf{c}$  $\gets \cvexpr(\phi_l, \mathbf{v}_c, \mathbf{v}_f, \{T_k\}_{k=1}^K)$. 
    \State ${\phi_{l+1}}.\mathbf{o}$, ${\phi_{l+1}}.\mathbf{c}$  $\gets \cvexpr(\phi_{l+1}, \mathbf{v}_c, \mathbf{v}_f, \{T_k\}_{k=1}^K)$.    
    \EndIf
    \State $\mathcal{P}_{new} \gets \mathcal{P}_{new} \cup \{\phi_l, \phi_{l+1}\}$;
   \EndFor
   
   \State $\mathcal{H}\gets \text{{\texttt{TopK}}}(\mathcal{P}_{new} \cup \mathcal{H}, K=\texttt{\#Hof})$; \Comment{Update the hall of fame set.}
  \State  {$\mathcal{P}_{gp} \gets \texttt{selection}(\mathcal{P}_{new}, M)$};
   \EndFor
  \Return  {GP pool and hall-of-fame $\mathcal{P}_{gp}, \mathcal{H}$.}
\end{algorithmic}
\end{algorithm}

Figure~\ref{fig:cvgp} shows the high-level idea of fitting an equation using CVGP. 
Here the process has four stages, each stage with a decreased number of controlled variables. The trial data in each stage is shown at the bottom and the best expression found is shown at the top. The expandable constants are bold and blue. The readers can see how the fitted equations grow into the final ground-truth equation, with one variable added at a time.

\smallskip
\noindent\textbf{The Availability of a Data Oracle.}
A crucial assumption behind the success of \method~is the availability of a {data oracle $\mathcal{D}^o$} that returns a (noisy) observation of the dependent variable with input variables in $\mathbf{v}_c$ controlled and $\mathbf{v}_f$ free. 

This differs from the classical setting of symbolic regression, where a dataset is obtained before learning \cite{matsubara2022rethinking,doi:10.1080/15598608.2007.10411855}.
Such a data oracle represents conducting control variable experiments in the real world, which can be expensive. 

However, we argue that the integration of experiment design in the discovery of scientific knowledge is indeed the main driver of the successes of \method. 
This idea has received tremendous success in early works~\cite{DBLP:conf/ijcai/Langley77,DBLP:conf/ijcai/Langley79,DBLP:conf/ijcai/LangleyBS81} but unfortunately has been largely forgotten in today's symbolic regression community. 
Our work does not intend to show the superiority of one approach. Instead, we would like to point out that carefully designed experiments can improve any method, and GP is used as an example. 
We acknowledge that fully controlled experiments may be difficult in some scenarios.
In cases where it is difficult to obtain
such a data oracle, we propose to leverage counterfactual reasoning to construct datasets corresponding to control variable trials by sampling from existing training data.
We leave such effort as future work.

\subsection{Theoretical Analysis}
We demonstrate in this section that the idea of control variable experiments may bring an exponential reduction in the search space for particular classes of symbolic expressions. 
To see this, we assume that the learning algorithm follows a search order from simple to complex symbolic expressions. 
\begin{definition}
\textit{The search space of symbolic expression trees of $l$ nodes} $S(l)$ is made up of all symbolic expression trees involving at most $l$ nodes.
\end{definition}

\begin{lemma}\label{lem:searchspace}
Assuming that all operands are binary, $o$ is the number of operands, and $m$ is the number of input variables. The size of the search space of symbolic expression trees of $l$ nodes scales exponentially; more precisely at $\mathcal{O}((4(m+1)o)^{\frac{l-1}{2}})$ and $\Omega((4(m+1)o)^{\frac{l-1}{4}})$. 
\end{lemma}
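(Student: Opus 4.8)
The plan is to turn the counting problem into a finite sum over the number of internal nodes, evaluate that summand exactly, and then sandwich the sum by its dominant term. First I would record the rigid structure forced by binarity: an expression tree with $k$ operator (internal) nodes has exactly $k+1$ leaves and hence $2k+1$ nodes in total, so only odd sizes occur (in particular $S(l)=S(l-1)$ for even $l$), and a tree on at most $l$ nodes has $k \le K := \lfloor (l-1)/2\rfloor$ internal nodes. For a fixed $k$, such a tree is specified by three independent choices: the unlabeled binary‑tree shape on $k$ internal nodes (there are $C_k=\frac{1}{k+1}\binom{2k}{k}$, the $k$‑th Catalan number, of these); an operator from the $o$ available ones at each of the $k$ internal nodes; and a symbol from the $m$ variables or the constant at each of the $k+1$ leaves. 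Multiplying and summing over admissible $k$ gives
\[
S(l) \;=\; \sum_{k=0}^{K} C_k\, o^{k} (m+1)^{k+1},
\]
and everything afterward is estimating this sum.

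For the upper bound I would use the crude estimate $C_k \le 4^k$ (immediate from $\binom{2k}{k}\le 4^k$), so that $S(l) \le (m+1)\sum_{k=0}^{K}\big(4(m+1)o\big)^{k}$. Regarding $m$ and $o$ as fixed and letting $l\to\infty$, the base $4(m+1)o$ is a constant exceeding $1$, the geometric sum is within a constant factor of its last term, and $K\le (l-1)/2$, which yields $S(l)=\mathcal{O}\big((4(m+1)o)^{(l-1)/2}\big)$. For the lower bound I would keep only the $k=K$ term and apply a weak Catalan lower bound such as $C_k \ge \binom{2k}{k}/(k+1) \ge 4^k/\big((k+1)(2k+1)\big)$ — using that $\binom{2k}{k}$ is the largest of the $2k+1$ binomial coefficients summing to $4^k$. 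This gives $S(l)\ge C_K\,o^{K}(m+1)^{K+1}\ge (4(m+1)o)^{K}\big/\big((K+1)(2K+1)\big)$, and since $K\ge (l-2)/2$ while the denominator is only polynomial in $l$, for all sufficiently large $l$ this exceeds $(4(m+1)o)^{(l-1)/4}$; hence $S(l)=\Omega\big((4(m+1)o)^{(l-1)/4}\big)$.

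The computation is routine, so the ``hard part'' is really bookkeeping and stating the claim precisely. One must be explicit that the asymptotics are taken in $l$ with $m$ and $o$ held fixed, so that the $(m+1)$ prefactor and the $\Theta(l)$ number of summands are absorbed into the $\mathcal{O}$/$\Omega$ constants; one must route the parity of $l$ through the floor in $K$; and one should note that the slack exponent $(l-1)/4$ in the lower bound is deliberate — it is chosen precisely so that the polynomial Catalan correction and the floor vanish without invoking any sharp Stirling estimate, and in fact an exponent $(l-1)/2-\varepsilon$ would also be attainable by the same argument. A final sanity check is the base case $k=0$ (a single leaf), which correctly contributes $C_0\,o^0(m+1)^1=m+1$ trees.
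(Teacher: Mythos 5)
Your proposal is correct and follows essentially the same route as the paper's proof: decompose by the number of internal nodes, count each size class as a Catalan number times $o^{k}(m+1)^{k+1}$, bound the sum above by a geometric series and below by its largest term with the polynomial Catalan correction absorbed into the relaxed exponent $(l-1)/4$. The only difference is that you use the elementary bounds $4^{k}/((k+1)(2k+1)) \le C_k \le 4^{k}$ in place of the Stirling asymptotic $C_n \sim 4^{n}/(n^{3/2}\sqrt{\pi})$, and you handle the parity of $l$ explicitly, both of which are minor tightenings rather than a different argument.
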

\begin{proof}
    Because all operands are binary, a symbolic expression tree of $l$ nodes has $\frac{l+1}{2}$ leaves and $\frac{l-1}{2}$ internal nodes. 
    The number of binary trees of $\frac{l-1}{2}$ internal nodes is given by the Catalan number $C_{{(l-1)}/{2}} = \frac{l-1}{{l+1}}{l-1 \choose \frac{(l-1)}{2}}$, which asymptotically scales at $\frac{2^{{l-1}}}{{(\frac{l-1}{2})}^{3/2}\sqrt{\pi}}$. 
    A symbolic expression replaces each internal node of a binary tree with an operand and replaces each leaf with either a constant or one of the input variables. 
    Because there are $o$ operands and $m$ input variables, the total number of different symbolic expression trees involving $l$ nodes is given by:
    \begin{align}
    A(l) = C_{(l-1)/{2}} (m+1)^{\frac{l+1}{2}} o^\frac{l-1}{2}
    &\sim \frac{(4(m+1)o)^{\frac{l-1}{2}}}{ \left(\frac{l-1}{2}\right)^{3/2}}.
    \end{align}
    Hence, the total number of trees up to $l$ nodes is:
    \begin{align}
    S(l) = \sum_{i=0}^{(l-1)/2} A(2i+1) \sim \sum_{i=0}^{(l-1)/2} \frac{(4(m+1)o)^{i}}{ i^{3/2}}.
    \end{align}
    When $i$ is sufficiently large,
    \begin{align}
    (4(m+1)o)^{i/2} \leq \frac{(4(m+1)o)^{i}}{ i^{3/2}} \leq (4(m+1)o)^{i}.
    \end{align}
    Therefore, $S(l) \leq \sum_{i=0}^{(l-1)/2} (4(m+1)o)^{i} \in \mathcal{O}((4(m+1)o)^{(l-1)/2})$, and
    $S(l) \geq (4(m+1)o)^{(l-1)/2} / ((l-1)/2)^{3/2} \geq (4(m+1)o)^{(l-1)/4}$, which implies 
    $S(l) \in \Omega((4(m+1)o)^{\frac{l-1}{4}})$.
\end{proof}
The proof of Lemma~\ref{lem:searchspace} mainly involves counting binary trees.
For our purposes, it is sufficient to know that the size is exponential in $l$.

\begin{definition}[Simple to complex search order]
A symbolic regression algorithm follows a simple to complex search order if it expands its search space from short to long symbolic expressions; \textit{i.e.}, first search for the best symbolic expressions in $S(1)$, then in $S(2) \setminus S(1)$, etc. 
\end{definition}

It is difficult to quantify the search order of any symbolic regression algorithms. 
However, we believe the simple to complex order reflects the search procedures of a large class of symbolic regression algorithms, including our \method. 
In fact, \cite{DBLP:journals/tcyb/ChenXZ22a} explicitly use regularizers to promote the search of 
simple and short expressions. 
Our \method~follows the simple to complex search order approximately.
Indeed, it is possible that genetic programming encounters more complex equations
before their simpler counterparts. 
However, in general, the expressions are built from simple to complex equations by mating and mutating operations in genetic programming algorithms. 

\begin{proposition}[Exponential Reduction in the Search Space] There exists a symbolic expression $\phi$ 
of $(4m-1)$ nodes, a normal symbolic regression
algorithm following the simple to complex search order has to explore a search space whose size is exponential in $m$ to find the expression, while 
\method~following the simple to complex order only expands $\mathcal{O}(m)$ constant-sized search spaces.
 
\begin{proof}
Consider a dataset generated by the ground-truth symbolic expression made up of 2 operands ($+, \times$), $2m$ input variables and $(4m-1)$ nodes:
\begin{align}
(x_1 + x_2) (x_3 + x_4) \ldots (x_{2m-1} + x_{2m}).
\end{align}
To search for this symbolic regression, a normal algorithm following the simple to complex order needs to consider all 
expressions up to $(4m-1)$ nodes. According to Lemma~\ref{lem:searchspace}, the normal algorithm has a search space of at least $\Omega((16m + 8)^{m-1/2})$, which is exponential in $m$.

On the other hand, {in the first step of \method}, $x_{2}, \ldots, x_{2m}$ are controlled and only $x_1$ {is free}.
In this case, the ground-truth equation in the reduced form is
\begin{align}
(x_1 + C_1) D_1,
\end{align}
in which both $C_1$ and $D_1$ are summary constants. Here $C_1$ represents $x_2$ and $D_1$ represents $(x_3 + x_4) \ldots (x_{2m-1} + x_{2m})$ in the control variable experiments. 
The reduced equation is quite simple under
the controlled environment. \method~should be able to find 
the ground-truth expression exploring search space $S(5)$. 

Proving using induction. In step $2i~(1\leq i \leq m)$, variables $x_{2i+1}, x_{2i+2}, \ldots, x_{2m}$ are held as constants, and $x_1, \ldots, x_{2i}$ are allowed to vary. 
The ground-truth expression in the reduced form found in the previous $(2i-1)$-th step is:
\begin{align}
(x_1 + x_2) \ldots (x_{2i-1} + C_{2i-1}) D_{2i-1}.
\label{eq:true2i-1}
\end{align}
\method~needs to extend this equation to be the ground-truth expression in the reduced form for the $2i$-th step, which is:
\begin{align}
(x_1 + x_2) \ldots (x_{2i-1} + x_{2i}) D_{2i}.
\label{eq:true2i}
\end{align}
We can see the change is to replace the summary constant $C_{2i-1}$ to $x_{2i}$.
Assume {the data is noiseless} and \method~can confirm expression \eqref{eq:true2i-1} is the ground-truth reduced-form expression for the previous step. This means all the operands and variables will be frozen by \method, and only $C_{2i-1}$ and $D_{2i-1}$ are allowed to be replaced by new expressions. 
Assume \method follows the simple to complex search order, it should find the ground-truth expression \eqref{eq:true2i} by searching replacement expressions of lengths up to 1.

Similarly, in step $2i+1$, assume \method confirms the ground-truth expression 
in the reduced form in step $2i$, \method also only needs to search in constant-sized spaces to
find the new ground-truth expression. Overall, we can see only $\mathcal{O}(m)$ searches in constant-sized spaces are required for \method to find the final ground-truth expression. 
\end{proof}
\end{proposition}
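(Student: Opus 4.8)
The plan is to prove the two halves of the statement separately, both witnessed by the same family $\phi_m:=(x_1+x_2)(x_3+x_4)\cdots(x_{2m-1}+x_{2m})$ over $2m$ variables and the two operators $\{+,\times\}$.

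\emph{Lower bound for a normal algorithm.} First I would record the node count of the witness: each factor $(x_{2j-1}+x_{2j})$ is three nodes (one $+$ and two leaves), there are $m$ of them, and $m-1$ product nodes join them, for $3m+(m-1)=4m-1$ nodes. A simple-to-complex searcher must enumerate every expression with at most $4m-1$ nodes, i.e.\ all of $S(4m-1)$, before it can be certain to have examined the $(4m-1)$-node target $\phi_m$; Lemma~\ref{lem:searchspace} with $l=4m-1$, $o=2$ and $2m$ input variables then bounds this below by $\Omega\big((16m+8)^{m-1/2}\big)$, which is exponential in $m$. I would take mild care to phrase this as a lower bound on what must be \emph{enumerated}, the intended meaning of ``explore a search space.''

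\emph{Upper bound for \method.} I would argue by induction on the step index $t=1,\dots,2m$, with the hypothesis that at step $t$ the ground-truth expression in reduced form is exactly the one named in Equations~\eqref{eq:true2i-1}--\eqref{eq:true2i} (that is, $(x_1+x_2)\cdots(x_{2i-1}+C_{2i-1})D_{2i-1}$ at odd step $2i-1$ and $(x_1+x_2)\cdots(x_{2i-1}+x_{2i})D_{2i}$ at even step $2i$, with the $C$'s and $D$'s being summary constants), and that \method reaches it by searching a space whose size is bounded independently of $m$. The base case $t=1$ has only $x_1$ free and reduced form $(x_1+C_1)D_1$, a five-node tree over the fixed alphabet $O_p\cup\{\mathrm{const},x_1\}$, hence found inside the version of $S(5)$ built from that alphabet, a space of constant size. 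For the inductive step I would use two structural facts about \method: (i) once step $t{-}1$'s reduced form is confirmed (consistent near-zero fitness across trials, under the noiseless assumption), every operator and every variable leaf of it is frozen and only its summary constants remain expandable; and (ii) by construction the mutation alphabet at step $t$ is $O_p\cup\{\mathrm{const},x_t\}$, whose size $o+2$ does not depend on $m$. Hence carrying the step-$(t{-}1)$ form into the step-$t$ form requires only a bounded-size modification over this bounded alphabet --- replacing a summary constant by the leaf $x_t$ at the even steps, or by the five-node factor $(x_t+\mathrm{const})\cdot\mathrm{const}$ at the odd steps (built from scratch when $t=1$) --- which a simple-to-complex search finds inside a space whose size does not grow with $m$. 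Summing over the $2m$ steps yields the claimed $\mathcal{O}(m)$ constant-size searches.

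\emph{Main obstacle.} The delicate step is justifying the hypothesis ``\method can confirm that the previous reduced form is the ground truth,'' on which the freezing, and hence the whole telescoping, rests. This needs: that noiselessness makes a structurally correct reduced form attain exactly zero error in every trial; that, over the \emph{frozen} search space actually available at step $t$, this reduced form is the unique optimum, so freezing never prunes away the only route to it (true for this family by construction, though not for an arbitrary target); and that the summary/stand-alone classification of constants is correct, i.e.\ that $C_{t-1}$ and $D_{t-1}$ really take differing fitted values as the controlled variables vary --- which holds here because $x_t$ and the tail product $\prod_{j>\lceil t/2\rceil}(x_{2j-1}+x_{2j})$ are non-constant functions of the controlled variables. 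Since the Proposition is an existence claim, it suffices to exhibit one dataset on which all of this goes through, so I would choose the controlled-variable values generically (ruling out accidental cancellations) and observe that a simple-to-complex search inside each constant-size stage reaches that stage's unique reduced-form optimum. The remaining pieces --- the node count and the appeal to Lemma~\ref{lem:searchspace} --- are routine.
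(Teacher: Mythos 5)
Your proposal follows essentially the same route as the paper's proof: the same witness $(x_1+x_2)\cdots(x_{2m-1}+x_{2m})$, the same appeal to Lemma~\ref{lem:searchspace} with $l=4m-1$, $o=2$ and $2m$ variables for the $\Omega((16m+8)^{m-1/2})$ lower bound, and the same induction over control-variable stages in which freezing confines each stage to a constant-sized search. Your additions --- the explicit $4m-1$ node count, the explicit five-node replacement at odd steps, and the flagged assumptions (noiselessness, uniqueness of the optimum in the frozen space, correct summary/stand-alone classification) --- only make explicit what the paper's argument assumes implicitly, so the two proofs coincide in substance.
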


\section{Related Work}
\noindent\textbf{Symbolic Regression.} 
Symbolic Regression is proven to be NP-hard~\cite{journal/tmlr/virgolin2022}, due to the search space of all possible symbolic expressions being exponential in the number of input variables.  
Early works in this domain are based on heuristic search~\cite{LANGLEY1981DataDiscovery,LENAT1977ubiquity}.
Generic programming turns out to be effective in searching for good candidates of symbolic expressions~\cite{journal/2020/aifrynman,DBLP:conf/gecco/VirgolinAB19,DBLP:journals/corr/abs-2203-08808,DBLP:conf/gecco/HeLYLW22}. Reinforcement learning-based methods propose a risk-seeking policy gradient to find the expressions~\cite{DBLP:conf/iclr/PetersenLMSKK21,DBLP:journals/corr/abs-2205-11107,DBLP:conf/nips/MundhenkLGSFP21}. Other works reduced the combinatorial search space by considering the composition of base functions, \textit{e.g.} Fast function extraction~\cite{mcconaghy2011ffx} and elite bases regression~\cite{DBLP:conf/icnc/ChenLJ17}.  
In terms of the families of expressions, research efforts  have been devoted to searching for polynomials with single or two variables~\cite{DBLP:journals/gpem/UyHOML11}, time series equations~\cite{DBLP:conf/icml/BalcanDSV18}, and also equations in physics~\cite{journal/2020/aifrynman}. 
Multi-variable symbolic regression is more challenging because the search space increases exponentially with respect to  the number of independent variables. Our \method is a tailored algorithm to solve multi-variable symbolic regression problems. 

% \noindent\textbf{Experiment Design.} \cite{doi:10.1080/15598608.2007.10411855}

\smallskip
\noindent\textbf{AI-driven Scientific Discovery.} Recently AI has been highlighted to enable scientific discoveries in diverse domains~\cite{langey1988scientificdiscovery,Bengio2022Nature}. 
Early work in this domain focuses on learning logic (symbolic) representations~\cite{BRADLEY2001reasoning,Bridewell2008inductive}.
Recently, learning Partial Differential Equations (PDEs) from data has also been studied extensively 
~\cite{Dzeroski1995lagrange,brunton2016sparse,PhysRevE.100.033311,doi:10.1098/rspa.2018.0305,iten2020discovering,DBLP:conf/nips/CranmerSBXCSH20,Raissi20Fluid,RAISSI2019PhysicsInformedNN,Liu21AIPoincare,nanovoid_tracking,chen2018neural}. 
In this domain, a line of works develops robots that automatically refine the hypothesis space, some with human interactions~\cite{Valdes1994,king2004functional,king2009autosci}.
These works are quite related to ours because they also actively probe the hypothesis spaces, albeit they are in  biology and chemistry.

\smallskip
\noindent\textbf{Active Learning and Reasoning.}
Active learning considers querying data points actively to maximize the learning performance~\cite{DBLP:journals/ftml/Hanneke14,golovin2010near}. 
Our approach is related to active learning because control variable experiments can be viewed as a way to actively collect data. % ion approach. 
However, besides active data collection, our \method~builds
simple to complex models, which is not in active learning. 

\smallskip
\noindent\textbf{Meta-reasoning -- Thinking Fast and Slow}. 
The co-existence of fast and slow cognition systems marks an interesting side of human intelligence~\cite{kahneman2011thinking,DBLP:conf/nips/AnthonyTB17,DBLP:conf/aaai/BoochFHKLLLMMRS21}. 
Our \method is motivated by this dual cognition process. % of n. 
In essence, we argue instead of entirely relying on the brute-force way of learning using big data and heavy computation (fast thinking), incrementally expanding from reduced-form equations to the full equation may result in better outcomes (slow thinking).

\smallskip
\noindent\textbf{Causality.}
Control variable experiments are closely related to the idea of intervention, which is commonly used to discover causal relationships \cite{simon1954spurious,langley2019scientific,glymour2014discovering,jaber2022causal,pearl2009causality}. 
However, we mainly use control variable experiments to accelerate symbolic regression, which still identifies correlations.

\section{Experiments}
In this section, we demonstrate \method finds the symbolic expressions with the smallest Normalized Mean-Square Errors (NMSEs) 
among all 7 competing approaches on 21 noiseless benchmark datasets (in Table~\ref{tab:summary-nmse}) and 
20 noisy benchmark datasets (in Table~\ref{tab:summary-noisy}). 
In the ablation studies, we show our \method is consistently better than the baselines when evaluated in different evaluation metrics, evaluating different quantiles of the NMSE metric, with different amount of Gaussian noise added to the data (Figure \ref{fig:evalucate-metric}, more complete results in Figure \ref{fig:Quartile-nmse-noiseless-full} and \ref{fig:Quartile-nmse-noisy-full} in the appendix). 
For simple datasets, our approach can recover the ground-truth symbolic expressions. 
Table~\ref{tab:recovery} shows our \method has a higher rate of recovering the ground-truth expressions than baselines.

\begin{table}[!t]
    \centering
    \scalebox{0.75}{
    \begin{tabular}{c|rr|rr|rr|rr|rr|rr|rr}
    \toprule
        Dataset &\multicolumn{2}{c|}{\method (ours)} & \multicolumn{2}{c|}{GP}&\multicolumn{2}{c|}{DSR} &\multicolumn{2}{c|}{PQT} &\multicolumn{2}{c|}{VPG} & \multicolumn{2}{c|}{GPMeld}& \multicolumn{2}{c}{Eureqa}  \\ 
       configs & $50\%$&  $75\%$& $50\%$&  $75\%$& $50\%$&  $75\%$& $50\%$&  $75\%$& $50\%$&  $75\%$& $50\%$&  $75\%$& $50\%$&  $75\%$\\\midrule
     (2,1,1) & <$\mathbf{1e\text{-}6}$ & <$\mathbf{1e\text{-}6}$ & $2.19e{\text{-}3}$ & $7.91e{\text{-}2}$ & $3.507$ & $4.787$ & $0.262$ & $1.16$ & $0.359$ & $19.16$ & $0.273$ & $1.356$& <$\mathbf{1e\text{-}6}$ & <$\mathbf{1e\text{-}6}$ \\
     (3,2,2) & $0.001$ & $0.004$ & $0.015$ & $0.135$ & $1.53$ & $43.09$ & $0.58$ & $1.13$ & $0.83$ & $1.32$  & $1.06$ & $2.18$ & <$\mathbf{1e\text{-}6}$ & <$\mathbf{1e\text{-}6}$\\
     (4,4,6)& $\mathbf{0.008}$& $0.059$& $0.012$& $\mathbf{0.054}$& $1.006$& $1.249$& $1.006$& $2.459$& $1.221$& $2.322$& $1.127$& $2.286$& $1.191$& $6.001$ \\
     (5,5,5) & $\mathbf{0.011}$ & $\mathbf{0.019}$ & $0.025$ & $0.177$ & $1.038$ & $8.805$ & $1.048$  & $4.736$ & $1.401$ & $38.26$ & $1.008$ & $1.969$ & $0.996$  & $6.340$\\
     (5,5,8) & $\mathbf{0.007}$ & $\mathbf{0.013}$ & $0.010$ & $0.017$ & $1.403$ & $5.161$ & $1.530$ & $41.27$ & $4.133$ & $27.42$ & $1.386$ & $8.092$ & $1.002$ & $1.495$\\
       (6,6,8) & $\mathbf{0.044}$& $\mathbf{0.074}$ &  $0.058$ & $0.200$ &  $1.963$ & $90.53$ & $4.212$ & $8.194$ &  $4.425$ & $22.91$ &$15.58$ & $269.6$  &$1.005$ & $1.150$    \\
     (6,6,10) & $\textbf{0.012}$ & $\textbf{0.027}$& $0.381$ & $0.820$& $1.021$ & $1.036$& $1.006$ & $1.048$& $1.003$ & $1.020$& $1.022$ & $1.689$& $1.764$ & $49.041$ \\
        \hline
         \multicolumn{15}{c}{\textbf{(a)} Datasets containing operands $\{\texttt{inv},+,-,\times\}$} \\
    \hline
     (2,1,1) &  $1.06e{\text{-}4}$ & $6.69e{\text{-}2}$ & $7.56e{\text{-}4}$ & $7.72e{\text{-}2}$ & $1.87$ & $8.16$ & $0.20$ & $0.22$ & $2.38$ & $6.14$ & <$\mathbf{1e\text{-}6}$ & <$\mathbf{1e\text{-}6}$ & <$\mathbf{1e\text{-}6}$ & <$\mathbf{1e\text{-}6}$\\
    (3,2,2) & $0.005$ & $0.123$ & $0.023$ & $0.374$ & $0.087$ & $0.392$ & $0.161$ & $0.469$ & $0.277$  & $0.493$ & $0.112$ & $0.183$ & <$\mathbf{1e\text{-}6}$ & <$\mathbf{1e\text{-}6}$\\
    (4,4,6) & $\textbf{0.028}$ & $0.132$ & $0.044$ & $\textbf{0.106}$ & $2.815$ & $9.958$ & $2.381$ & $13.844$ & $2.990$ & $11.316$ & $1.670$ & $2.697$ & $0.024$ & $0.122$ \\
   (5,5,5)& ${0.086}$ & ${0.402}$ & $\mathbf{0.063}$ & $\mathbf{0.232}$ & $2.558$ & $ 3.313$ & $2.168$ & $ 2.679$ & $1.903$ & $ 2.780$ & $1.501$ & $2.295$ & $0.158$ & $0.377$\\
 (5,5,8) & $\mathbf{0.014}$ & $\mathbf{0.066}$ & $0.102$ & $ 0.683$ & $ 2.535$ & $2.933$ & $ 2.482$ & $2.773$ & $2.440$ & $3.062$ & $2.422$ & $3.853$ & $0.284$ & $0.514$\\
 (6,6,8) &$\mathbf{0.066}$ & $\mathbf{0.166}$ & $0.127$ & $ 0.591$ & $ 0.936$ & $1.079$ & $ 0.983$ & $1.053$ & $ 0.900$ & $1.018$ & $0.964$ & $1.428$ & $0.433$ & $1.564$\\
(6,6,10) &  $\textbf{0.104}$&  $\textbf{0.177}$&  $0.159$&  $0.230$&  $6.121$&  $16.32$&  $5.750$&  $16.29$&  $3.857$&  $19.82$&  $7.393$&  $21.709$&  $0.910$&  $1.927$ \\
 \hline
  \multicolumn{15}{c}{\textbf{(b)} Datasets containing operands $\{\sin, \cos,+,-,\times\}$.}  \\
\hline
   (2,1,1) &  <$\mathbf{1e\text{-}6}$ &  $0.004$ & <$\mathbf{1e\text{-}6}$ & $0.76$ & $0.032$ & $4.778$ & $0.038$  & $4.782$ & $0.115$ & $4.095$ & $0.008$ & $5.859$ & <$\mathbf{1e\text{-}6}$ & <$\mathbf{1e\text{-}6}$\\
    (3,2,2) & $0.039$ & $0.083$   & $0.043$ & $0.551$  & $0.227$ & $7.856$ & $0.855$ & $2.885$ & $0.233$ & $0.400$ & $0.944$  & $1.263$ & <$\mathbf{1e\text{-}6}$ & <$\mathbf{1e\text{-}6}$\\
 (4,4,6)& $\mathbf{0.015}$ & $\mathbf{0.121}$ & $0.042$ & $ 0.347$ & $1.040$ & $ 1.155$ & $1.039$ & $ 1.055$ & $1.049$ & $ 1.068$  & $1.886$ & $4.104$ & $0.984$ & $1.196$\\

 (5,5,5)& $\mathbf{0.038}$ & $\mathbf{0.097}$ & $0.197$ & $ 0.514$ & $3.892$ & $ 69.98$ & $4.311$ & $ 23.66$ & $5.542$ & $ 8.839$ & $9.553$ & $16.92$ & $0.901$& $1.007$\\
 (5,5,8)& $\mathbf{0.050}$ & $\mathbf{0.102}$ & $0.111$ & $ 0.177$ & $2.379$ & $ 2.526$ & $1.205$ & $ 2.336$ & $1.824$ & $ 2.481$ & $1.142$ & $1.874$ & $1.002$ & $2.445$ \\

 (6,6,8) & $\mathbf{0.029}$ & $\mathbf{0.038}$ & $0.091$ & $ 0.151$ & $1.605$ & $ 8.005$  & $1.718$ & $7.783$ & $4.691$ & $39.03$  & $1.398$ & $16.60$ & $1.001$ & $1.008$\\
 (6,6,10) & $\mathbf{0.018}$ & $\mathbf{0.113}$ & $0.087$ & $0.194$ & $2.083$ & $23.57$ & $1.797$ & $4.521$ & $1.888$ & $35.45$ & $2.590$ & $8.784$ & $1.001$ & $1.008$ \\
 \hline
  \multicolumn{15}{c}{\textbf{(c)} Datasets containing operands $\{\sin, \cos,\texttt{inv},+,-,\times\}$.} 
    \end{tabular}}
    \caption{Median (50\%) and 75\%-quantile NMSEs of the symbolic expressions found by all the algorithms on \textit{noiseless} benchmark datasets. 
    Our \method finds symbolic expressions with the smallest NMSEs. }
    \label{tab:summary-nmse}
\end{table}

\subsection{Experimental Settings} \label{sec:exp-set}
\noindent\textbf{Datasets.} 
To highlight the performance of \method in regressing multi-variable expressions, we consider synthesized datasets,  involving randomly generated expressions with multiple variables. 
A dataset is labeled by the ground-truth equation that generates it. 
The ground-truth equations we consider are multi-variable polynomials characterized by their operands and a tuple $(a, b, c)$. Here $a$ is the number of independent variables. $b$ is the number of singular terms. A singular term can be an independent variable, such as $x_1$, or a unary operand on a variable, such as $\sin(x_1)$. $c$ is the number of cross terms. They look like $C_1 x_3 x_4$ or $C_2 \sin(x_1) \texttt{inv}(x_5)$, etc. Here $C_1, C_2$ are randomly generated constants. 
The tuples and operands listed in different tables and charts indicate how the ground-truth expressions are generated. 
%
%We use $+,-,\times$ in addition to the operands listed to generate ground-truth expressions. 
%
For each dataset configuration, we repeat our experiments 10 times, each time with a randomly generated symbolic expression of the given configuration. 
For noiseless datasets, the output is exactly the evaluation of the ground-truth expression. For noisy datasets, the output is further perturbed by Gaussian noise of zero means and a given standard deviation. %See Table~\ref{tab:apx-dataset-example} 

\begin{table}[!t]
    \centering
    \scalebox{0.83}{
    \begin{tabular}{c|rr|rr|rr|rr|rr|rr}
    \hline
      Dataset &\multicolumn{2}{c|}{\method (ours)} & \multicolumn{2}{c|}{GP}&\multicolumn{2}{c|}{DSR} &\multicolumn{2}{c|}{PQT} &\multicolumn{2}{c|}{VPG} & \multicolumn{2}{c}{GPMeld} \\ 
       configs & $50\%$&  $75\%$& $50\%$&  $75\%$& $50\%$&  $75\%$& $50\%$&  $75\%$& $50\%$&  $75\%$& $50\%$&  $75\%$\\\hline
   (2,1,1) & ${0.198}$ & ${0.490}$ & $\mathbf{0.024}$ & $\mathbf{0.053}$ & $0.032$ & $3.048$ & $0.029$ & $0.953$ & $0.041$ & $0.678$ & $0.387$ & $22.806$\\
   (4,4,6)  & $\mathbf{0.036}$ & $\mathbf{0.088}$   & $0.038$ & $0.108$ & $1.163$ & $3.714$ & $1.016$ & $1.122$ & $1.087$ & $1.275$ & $1.058$ & $1.374$\\
     (5,5,5)  & $0.076$ & $0.126$ & $\mathbf{0.075}$ &$\mathbf{0.102}$  & $1.028$ & $2.270$  & $1.983$ & $4.637$  & $1.075$ & $2.811$ & $1.479$ & $2.855$ \\ 
    (5,5,8) &  $\mathbf{0.061}$ & $\mathbf{0.118}$ & $0.121$ & $0.186$  & $1.004$ & $1.013$ & $1.005$ &$1.006$ & $1.002$ & $1.009$ & $1.108$ & $2.399$ \\
    (6,6,8)  & $\mathbf{0.098}$ & $\mathbf{0.144}$ & $0.104$ & $0.167$ & $1.006$ & $1.027$ & $1.006$ & $1.020$ & $1.009$ & $1.066$ & $1.035$ & $2.671$  \\
    (6,6,10)  & $\mathbf{0.055}$ & $\mathbf{0.097}$ & $0.074$ & $0.132$ & $1.003$  & $1.009$  & $1.005$  & $1.008$  & $1.004$  & $1.015$  & $1.021$  & $1.126$ \\
    \hline
    \multicolumn{13}{c}{\textbf{(a)} Datasets containing operands $\{\sin, \cos,\texttt{inv},+,-,\times\}$.} \\
    \hline
     (2,1,1) & $\textbf{0.049}$ & $0.812$ & $0.103$ & $0.263$ &  $0.069$ & $0.144$ & $0.066$ & $\mathbf{0.093}$ & $0.094$ & $0.416$ & $0.066$ & $0.118$ \\
      (3,2,2) & $\mathbf{0.098}$ & $\mathbf{0.165}$ & $0.108$ & $0.425$ &  $0.350$ & $0.713$ & $0.351$ & $1.831$ &  $0.439$ & $0.581$ & $0.102$ & $0.597$\\
     (4,4,6)  & $\mathbf{0.078}$ & $\mathbf{0.121}$ & $0.120$ & $0.305$& $7.056$ & $16.321$ & $5.093$ & $19.429$ & $2.458$ & $13.762$ & $2.225$ & $3.754$ \\
    (5,5,5)   & $\mathbf{0.067}$ & $\mathbf{0.230}$ & $0.091$ & $0.313$& $32.45$ & $234.31$ & $36.797$ & $229.529$ & $14.435$ & $46.191$ & $28.440$ & $421.63$ \\ 
    (5,5,8)   & $\mathbf{0.113}$  & $\mathbf{0.207}$  & $0.119$  &$0.388$ & $195.22$ & $573.33$ & $449.83$ & $565.69$ & $206.06$ & $629.41$ & $363.79$ & $666.57$  \\
    (6,6,8)   & $\mathbf{0.170}$  & $\mathbf{0.481}$  & $0.186$  &  $0.727$  & $1.752$ & $3.824$ & $4.887$ & $15.248$ & $2.396$ & $7.051$ & $1.478$ & $6.271$ \\
    (6,6,10)   &  $\mathbf{0.161}$  & $\mathbf{0.251}$  & $0.312$  &$0.342$  & $11.678$ & $26.941$ & $5.667$ & $24.042$ & $7.398$ & $25.156$ & $11.513$ & $28.439$  \\
   \hline
    \multicolumn{13}{c}{\textbf{(b)} Datasets containing operands $\{\sin, \cos,+,-,\times\}$.} \\
\hline
     (2,1,1) & $0.241$ & $0.873$ & $0.102$ & $1.0018$ & $0.440$ & $1.648$ & $0.757$ & $9.401$ & $0.2142$ & $3.349$ & $\mathbf{0.0002}$ & $\mathbf{0.0007}$ \\ 
     (3,2,2) & $0.049$ & $\mathbf{0.113}$ & $\mathbf{0.023}$ & $0.166$ & $0.663$ & $2.773$ & $1.002$ & $1.992$ & $0.969$ & $1.310$ & $0.413$ & $2.510$ \\ 
    (4,4,6)   & $\mathbf{0.141}$ & $\mathbf{0.220}$ & $0.238$ & $0.662$  & $1.031$ & $1.051$ & $1.297$ & $1.463$ & $1.051$ & $1.774$& $1.093$ & $1.769$ \\ 
   (5,5,5)   & $\mathbf{0.157}$ & ${0.438}$   & $0.195$ & $\mathbf{0.337}$ & $1.098$ & $3.617$ & $1.018$ & $5.296$ & $1.012$ & $1.27$  & $1.036$  & $3.617$\\
   (5,5,8)   & $\mathbf{0.122}$ & $\mathbf{0.153}$ & $0.166$ & $0.186$  & $1.009$ & $1.103$ & $1.017$ & $1.429$ & $1.007$ & $1.132$ & $1.07$ & $2.904$\\
   (6,6,8)   & $\mathbf{0.209}$ & $\mathbf{0.590}$ & $\mathbf{0.209}$ & $0.646$  & $1.003$ & $1.153$ & $1.047$ & $1.134$ & $1.059$ & $1.302$ & $1.029$ & $3.365$\\
     (6,6,10)   & ${0.139}$ & ${0.232}$ & $\mathbf{0.073}$ & $\mathbf{0.159}$  & ${1.654}$ & $3.408$ & $1.027$ & $1.069$ & $1.009$ & $1.654$ & $1.445$ & $2.106$\\
 \hline
  \multicolumn{13}{c}{\textbf{(c)} Datasets containing operands $\{\sin, \cos,\texttt{inv},+,-,\times\}$.} \\
    \end{tabular}}
    \caption{Median (50\%) and 75\%-quantile NMSE values of the symbolic expressions found by all the algorithms on several \textit{noisy} benchmark datasets (Gaussian noise with zero mean and standard deviation 0.1 is added). 
    Our \method finds symbolic expressions with the smallest NMSEs.}
    \label{tab:summary-noisy}
\end{table}
\begin{figure}[!ht]
    \centering
    \includegraphics[width=0.32\linewidth]{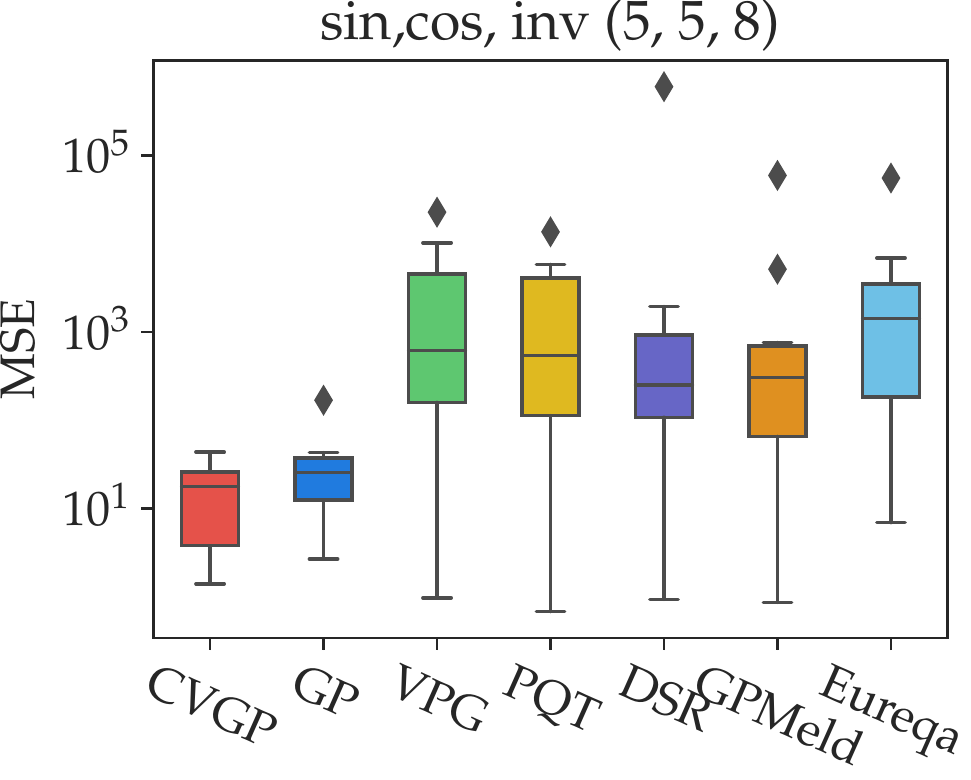}
    \includegraphics[width=0.32\linewidth]{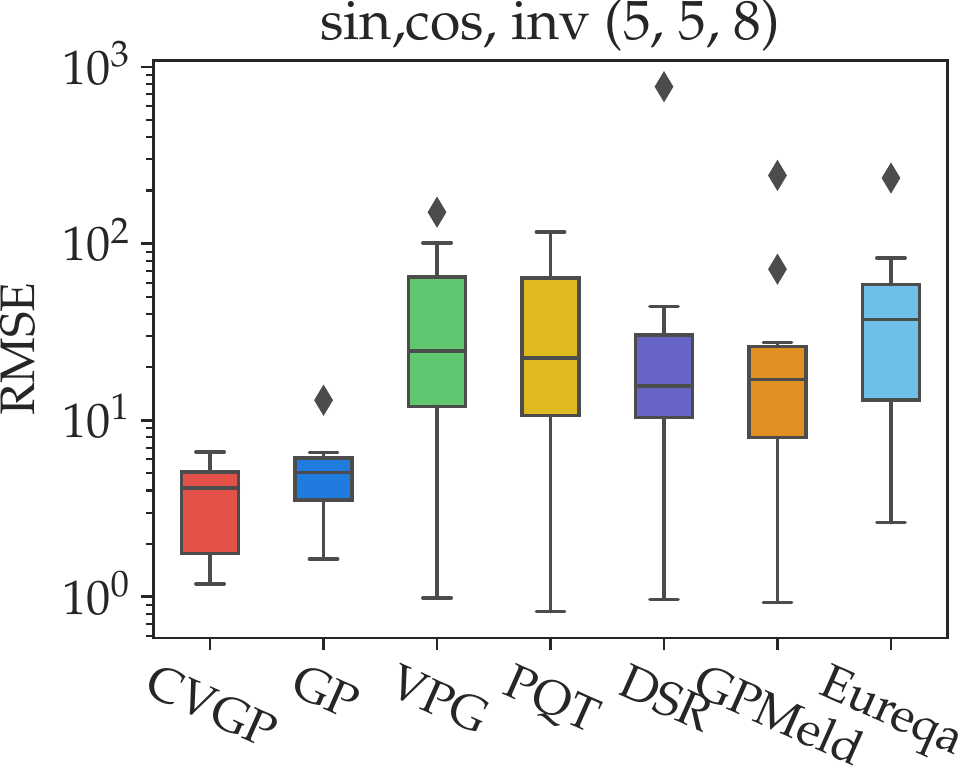}
    \includegraphics[width=0.32\linewidth]{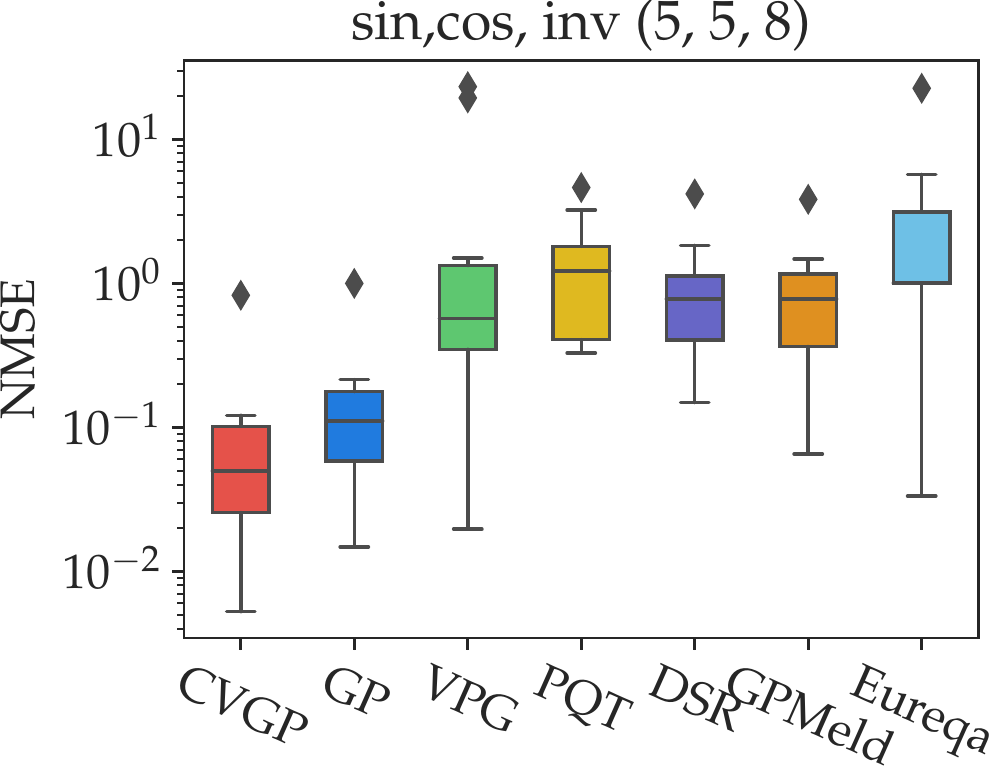} \\
    \hspace{4em} (a)\hspace{10em} (b) \hspace{12em} (c) \\
    \includegraphics[width=0.32\linewidth]{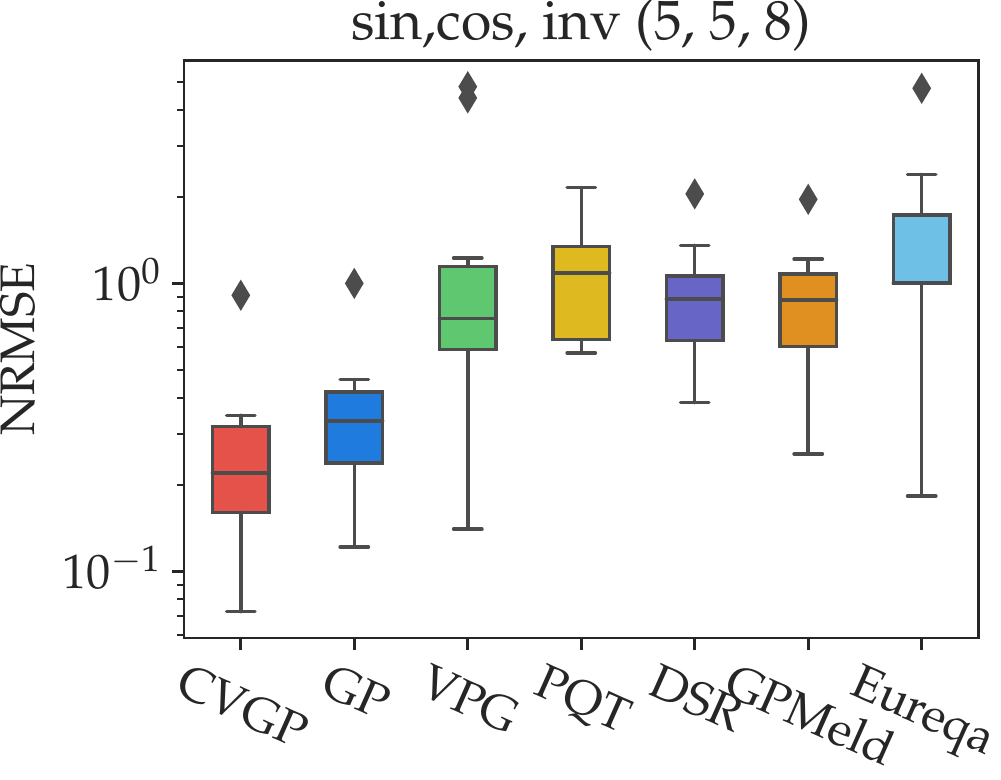}
    \includegraphics[width=0.31\linewidth]{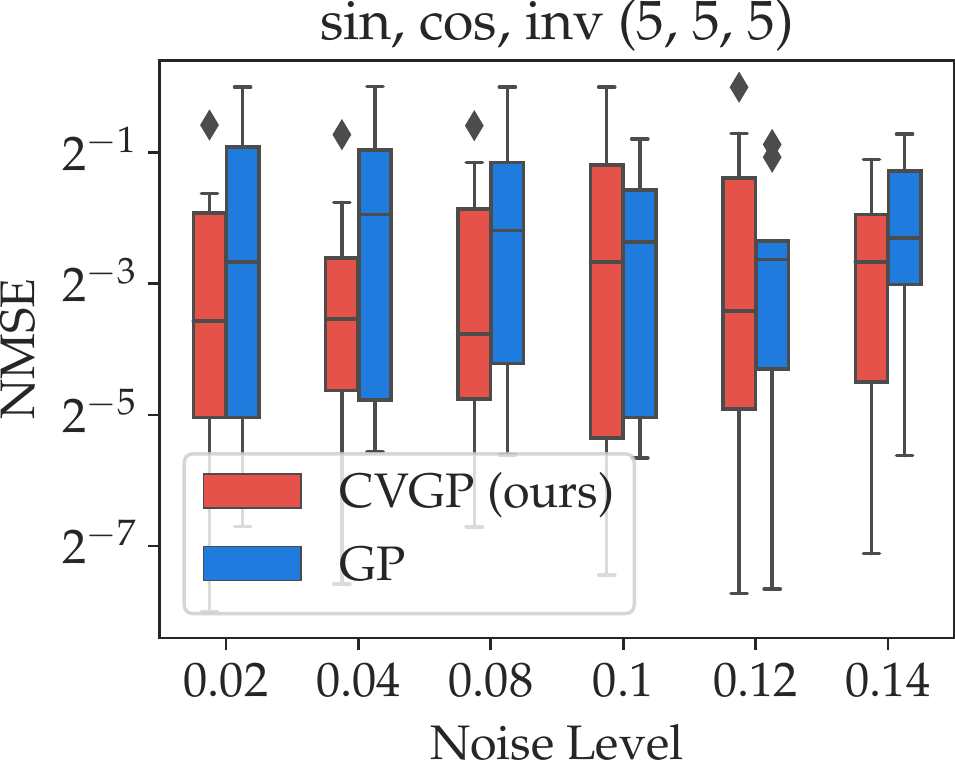}
    \includegraphics[width=0.31\linewidth]{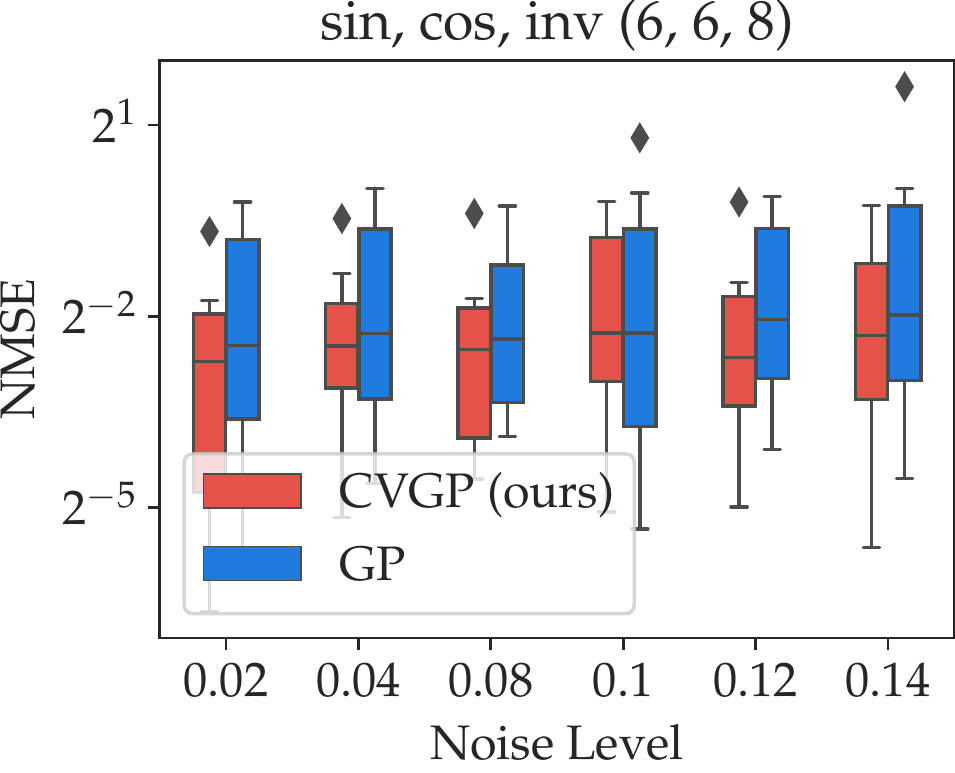} \\
    \hspace{4em} (d)\hspace{10em} (e) \hspace{12em} (f) \\
    \caption{\textbf{(a-d)} Box plots in different evaluation metrics of the expressions found by different algorithms on the noiseless dataset. 
    %
    %Our \method is the best regardless of the evaluation metrics. 
    %
    \textbf{(e-f)} Box plots in NMSE values for the expressions found by \method~and GP over benchmark datasets with different noise levels. %\method~consistently finds expressions with smaller NMSEs.
    Our \method is consistently the best regardless of the evaluation metrics and noise levels.
    }
    \label{fig:evalucate-metric}
\end{figure}

\noindent\textbf{Remarks on Public Available Datasets.} 
Most public datasets are black-box \cite{la2021contemporary}, containing randomly generated input and output pairs of an unknown symbolic equation. 
The point of our paper is to show customized collected control variable experiment data improves symbolic regression, and hence we cannot use these randomly generated data. 
In addition, most datasets are on equations of a small number of independent variables. 
We intentionally test  on benchmark sets involving many variables to highlight our approach. 

\smallskip
\noindent\textbf{Evaluation.} In terms of the evaluation metric, the median (50\%) and 75\%-percentile of the NMSE across these 10 experiments are reported. 
We choose to report median values instead of mean due to outliers (see box plots). This is a common practice for combinatorial optimization problems.

\noindent\textbf{Baselines.}  We consider the following baselines based on evolutionary algorithms: 1) Genetic Programming (GP)~\cite{DEAP_JMLR2012}.  2) Eureqa~\cite{DBLP:journals/gpem/Dubcakova11}. %, which is the current best commercial software based on evolutionary search algorithms. 
We also consider a series of baselines using reinforcement learning: 3) Priority queue training (PQT)~\cite{DBLP:journals/corr/abs-1801-03526}. 4) Vanilla Policy Gradient (VPG) that uses the REINFORCE algorithm~\cite{DBLP:journals/ml/Williams92} to train the model. 5) Deep Symbolic Regression (DSR)~\cite{DBLP:conf/iclr/PetersenLMSKK21}. 6) Neural-Guided Genetic Programming Population Seeding (GPMeld)~\cite{DBLP:conf/nips/MundhenkLGSFP21}.

We leave the detailed descriptions of the configurations of our \method~and baseline algorithms to {the supplementary materials and an anonymized website detailing our implementation\footnote{\url{https://github.com/jiangnanhugo/cvgp}}} and only mention a few implementation notes here. 
%
%For the following experiment comparison, we present an extended analysis of all the datasets in Appendix~\ref{apx:extend-experiment}.
%
We implemented GP and \method. 
They use a data oracle, which returns (noisy) observations of the ground-truth equation when queried with inputs. 
We cannot implement the same Oracle for other baselines because of code complexity and/or no available code. 
To ensure fairness, the sizes of the training datasets we use for those baselines are larger than the total number of data points accessed in the full execution of those algorithms. 
In other words, their access to data would have no difference if the same oracle has been implemented for them because it does not affect the executions whether the data is generated ahead of the execution or on the fly. 
The reported NMSE scores in all charts and tables are based on separately generated data that have never been used in training.  
The threshold to freeze operands in \method is if the MSE to fit a data batch is below 0.01. The threshold to freeze the value of a constant in \method is if the variance of best-fitted values of the constant across trials drops below 0.001.

\subsection{Experimental Analysis}

\noindent\textbf{Learning Result.}
Our \method attains the smallest median (50\%) and 75\%-quantile NMSE values among all the baselines mentioned in Section~\ref{sec:exp-set}, when evaluated on noiseless datasets (Table~\ref{tab:summary-nmse}) and noisy datasets (Table~\ref{tab:summary-noisy}). This shows our method can better handle multiple variables symbolic regression problems than the current best algorithms in this area.

\noindent\textbf{Ablation Studies.}
We use box plots in Figure~\ref{fig:evalucate-metric}(a-d) to show that the superiority of our \method~generalizes to other quantiles beyond the 50\% and 75\%-quantile. 
We also show the performance is consistent under the variations of evaluation metrics in Figure~\ref{fig:evalucate-metric}(a-d), and noise levels in Figure~\ref{fig:evalucate-metric}(e-f).

\noindent\textbf{Recovering Ground-truth Equations.} For  relatively less challenging noiseless datasets (\textit{i.e.}, $(2,1,1)$ with various operand sets), our \method~sometimes recovers ground-truth expressions. 
We evaluate the percentage that each algorithm successfully detects the ground-truth expressions on  $50$ randomly generated  benchmark datasets. Table~\ref{tab:recovery} shows that our \method  algorithm has a higher chance to recover ground-truth expressions than the GP method. %when compared to the GP baselines.

\begin{table}[!t]
    \centering
    \begin{tabular}{c|c|cc}\hline
        Operand set &Dataset configs&  \method (ours) & GP\\  \hline
        $\{\texttt{inv},+,-,\times\}$  &\multirow{3}{*}{(2,1,1) }& $\mathbf{64\%}$ & $44\%$  \\
        $\{\sin,\cos,+,-,\times\}$ & & $\mathbf{46\%}$ & $22\%$  \\
        $\{\sin, \cos, \texttt{inv},+,-,\times\}$ & & $\mathbf{44\%}$ & $32\%$  \\
\hline
    \end{tabular}
    \caption{Our \method~has a higher rate to recover the ground-truth expressions compared to GP on 3 simple datasets.}
    \label{tab:recovery}
\end{table}

\section{Conclusion}
In this research, we propose Control Variable Genetic Programming (\method) for symbolic regression with many independent variables. This  is beyond current state-of-the-art approaches mostly tested on equations with one or two variables. 
\method~builds equations involving more and more independent variables via control variable experimentation. 
Theoretically, we show \method~as an incremental building approach can bring an exponential reduction in the search spaces when learning a class of expressions. 
In experiments, \method finds the best-fitted expressions  among 7 competing approaches and on dozens of benchmarks. %attains better performance over the NMSE metric over 15 different-scale datasets, compared to several current-popular baselines. 

\section{Acknowledgments}
This research was supported by NSF grants IIS-1850243, CCF-1918327.

\bibliography{reference}
\bibliographystyle{unsrtnat}

%%%%%%%%%%%%%%%%%%%%%%%%%%%%%%%%%%%%%%%%%%%%%%%%%%%%%%%%%%%%%%%%%%%%%%%%%%%%%%%
%%%%%%%%%%%%%%%%%%%%%%%%%%%%%%%%%%%%%%%%%%%%%%%%%%%%%%%%%%%%%%%%%%%%%%%%%%%%%%%
% APPENDIX
%%%%%%%%%%%%%%%%%%%%%%%%%%%%%%%%%%%%%%%%%%%%%%%%%%%%%%%%%%%%%%%%%%%%%%%%%%%%%%%
%%%%%%%%%%%%%%%%%%%%%%%%%%%%%%%%%%%%%%%%%%%%%%%%%%%%%%%%%%%%%%%%%%%%%%%%%%%%%%%
\newpage
\appendix
% \onecolumn
% \input{tex/10-proof.tex}
\section{Experiment Settings} \label{apx:exp-set}

\subsection{Dataset Configuration}

 \noindent\textbf{Synthesised datasets}
 We generated several families of multiple-variable ground-truth expressions, and use these expressions to generate the datasets for control variable experiments $\{(\mathbf{x}_{i}, y_i)\}_{i=1}^n$.  
We label the datasets by 1) the set of mathematical operators that can be included in the ground-truth expression that generates the dataset; 2) the number of independent variables $m$; 3) the number of single terms of the ground-truth expression; 4) the number of cross terms of the ground-truth expression. We write items 2), 3), and 4) into a tuple in various charts and tables to represent the dataset. 

These four characteristics of the ground-truth expression determine the complexity of learning the ground-truth expression from the dataset. 
To give an example, one ground-truth expression that generates a dataset labeled by the ``\texttt{inv}, $+, -, \times$'' operators with configuration $(2,1,1)$ can be:
\begin{equation*}
0.4967-\frac{0.6824}{x_1}-\frac{0.7346 x_1}{x_0}
\end{equation*}
This expression contains two variables $x_1,x_2$, a cross term $\frac{x_1}{x_0}$ with a  constant  coefficient $0.7346$, a single term $1/x_1$ with a  constant  coefficient $0.6824$, and a constant $0.4967$. We give more examples of such ground-truth expressions in Table~\ref{tab:apx-dataset-example}.

\begin{table}[!ht]
    \centering
    \begin{tabular}{c|c}
    \toprule
      Dataset Configs &  \multirow{1}{*}{Example  expression}  \\  \hline
     % & \\\hline
   (2,1,1) & $0.497-{0.682}/{x_1}-{0.735 x_1}/{x_0}$\\
    (3,2,2) & $-0.603 x_0x_1 + 0.744 x_0 + {0.09 x_1}/{x_2} + 0.562 + {0.582}/{x_2}$ \\
    \hline
 \multicolumn{2}{c}{\textbf{(a)} Datasets containing operands $\{\texttt{inv},+,-,\times\}$}. \\ 
         \hline
 (2,1,1) & $0.259 x_0 \sin(x_1) + 0.197 x_1 - 0.750$\\
 (3,2,2) & $-0.095 x_0 x_2 + 0.012 x_2 \sin(x_1) - 0.576 x_2 - 0.214 \cos(x_0) - 0.625$\\
 \hline
  \multicolumn{2}{c}{\textbf{(b)} Datasets containing operands $\{\sin,\cos,+,-,\times\}$}. \\ 
        \hline
(2,1,1) & $0.7272\sin(x_0)-0.3866+{0.183}/{x_0}$ \\
(3,2,2)& ${0.7167 x_0}/{x_2}- 0.0632 x_1 + 0.2746 x_2 \cos(x_1) - 0.7293 +{0.0627}/{x_2}$ \\
\hline
 \multicolumn{2}{c}{\textbf{(c)} Datasets containing operands $\{\sin,\cos,\texttt{inv},+,-,\times\}$}. \\ 
    \end{tabular}
    \caption{Example expressions used in our experiments with different dataset configurations and the set of operands.}
    \label{tab:apx-dataset-example}
\end{table}

\noindent\textbf{Noisy Dataset Setting.} In real scientific experiments, the datasets often contain noises.
We add Gaussian noise $\mathcal{N}(0, \sigma^2)$ to the output $y$ in the dataset and control the noise rate by varying the values of $\sigma$ in $\{0.02,0.04,0.08,0.1, 0.12, 0.14\}$.

\subsection{ Evaluation Metrics}
Given a dataset $\{(\mathbf{x}_{i},y_i)\}_{i=1}^n$ generated from the ground-truth expression $\phi$, where $n$ indicates the number of test samples. 
The empirical variance of the target values $\sigma_y$ is defined as:
\begin{equation}
\sigma_y^2=\frac{1}{n}\sum_{i=1}^n \left(y_i-\frac{1}{n}{\sum_{i=1}^n y_i}\right)^2
\end{equation}
During training and testing, we measure the goodness-of-fit of a candidate expression $\bar{\phi}$,  by evaluating the following  Mean-square error (MSE), Negative Mean-square error (MSE), normalized  Mean-square error (NMSE), normalized root Mean-squared error (NRMSE), Invese normalized root Mean-squared error (InvNRMSE): 
\begin{equation}\label{eq:inv-nrmse}
\begin{aligned}
\text{MSE}&=\frac{1}{n}\sum_{i=1}^n(y_{i}-\bar{\phi}(\mathbf{x}_{i}))^2, \\
\text{NegMSE}&=-\frac{1}{n}\sum_{i=1}^n(y_{i}-\bar{\phi}(\mathbf{x}_{i}))^2, \\
\text{NMSE}&=\frac{1}{n}\frac{\sum_{i=1}^n(y_{i}-\bar{\phi}(\mathbf{x}_{i}))^2}{\sigma_y^2},\\ 
\text{RMSE}&=\sqrt{\frac{1}{n}\sum_{i=1}^n(y_{i}-\bar{\phi}(\mathbf{x}_{i}))^2} ,\\
\text{NRMSE}&=\frac{1}{\sigma_y} \sqrt{\frac{1}{n}\sum_{i=1}^n(y_{i}-\bar{\phi}(\mathbf{x}_{i}))^2} \\
\text{InvNRMSE}&=\frac{1}{\frac{1}{\sigma_y} \sqrt{\frac{1}{n}\sum_{i=1}^n(y_{i}-\bar{\phi}(\mathbf{x}_{i}))^2}}
\end{aligned}
\end{equation}

\subsection{Baselines implementation}

\noindent\textbf{GP} The implementation is based on the deap package\footnote{\url{https://github.com/DEAP/deap}}. 
However, we re-implemented the code following the concept of their package. 

\noindent\textbf{\method} Our method is implemented on top of the \textbf{GP} following Algorithm~\ref{alg:cvgp}. 

\noindent\textbf{Eureqa} This algorithm is currently maintained by the DataRobot webiste\footnote{\url{https://docs.datarobot.com/en/docs/modeling/analyze-models/describe/eureqa.html}}. We use the python API provided \footnote{\url{https://pypi.org/project/datarobot/}} to send the training dataset to the DataRobot website and collect the predicted expression after 30 minutes. This website only allows us to execute their program under a limited budget. Due to budgetary constraints, we were only able to test the datasets for the noiseless settings. 
For the Eureqa method, the fitness measure function is negative RMSE. We generated large datasets of size $10^5$ in training each benchmark.

\noindent\textbf{DSR, PQT, GPMeld} These algorithms are evaluated based on an implementation in  \footnote{\url{https://github.com/brendenpetersen/deep-symbolic-optimization}}. For every ground-truth expression, we generate a dataset of sizes $10^5$ training samples. Then we execute all these baselines on the dataset with the  configurations listed in Table~\ref{tab:apx-configuration}.
For the four baselines (\textit{i.e.}, PQT, VPG, DSR, GPMeld), the reward function is INV-NRMSE, which is defined as $\frac{1}{1+\text{NRMSE}}$. 

Note that Wolfram was not considered in this research, because the current ``\texttt{FindFormula}'' function in the Wolfram language only supports searching for single variable expressions.

\begin{table}[!t]
    \centering
    \scalebox{0.95}{
    \begin{tabular}{r|cccccc} \toprule
          & \method & GP&  DSR & PQT & GPMeld &Eureqa\\\midrule
         Reward function & NegMSE &NegMSE & InvNRMSE & InvNRMSE & InvNRMSE & NegRMSE\\ 
        Training set size & $25,600$ & $25,600$ & $$ 50, 000$$ & $ 50, 000$ & $ 50, 000$ & $ 50, 000$ \\
         Testing set size & $256$ & $25,600$ & $$ 256$$ & $256$ & $ 256$ & $ 256$ \\
         Batch size & $256$ & $256$ & $1024$ & $1024$ & $1024$ & $N/A$\\
      \#CPUs for training& 1 & 1 & 4 & 4 & 4 & 1\\
  $\epsilon$-risk-seeking policy  & N/A  &  0.02 & 0.02 & 0.02 & N/A& N/A\\ \midrule
        \#generic generations & 100 & 100 & N/A & N/A  & 60  &10,000\\
        \#Hall of fame &  10 & 10 & 25& 25& 25 & N/A \\
        Mutation Probability & {0.5} & 0.5 & 0.5 & N/A &N/A &N/A \\
        Mating Probability & {0.5} & 0.5 & 0.5 & N/A &N/A &N/A  \\
        \midrule
        training time (hours) & $\sim$0.5 & $\sim$0.5  & $\sim$0.5  & $\sim$0.5 & $\sim$6 & $\sim$0.5  \\
        \bottomrule
    \end{tabular}}
    \caption{Major hyper-parameters settings for all the algorithms considered in the experiment.}
    \label{tab:apx-configuration}
\end{table}

\subsection{Hyper-parameter Configurations}

We list the major hyper-parameter setting for all the algorithms in Table~\ref{tab:apx-configuration}. Note that if we use the default parameter settings, the GPMeld algorithm takes more than 1 day to train on one dataset. 
Because of such slow performance, we cut the  number of genetic programming generations in GPMeld by half to ensure fair comparisons with other approaches.

\section{Extended Experimental Analysis}  \label{apx:extend-experiment}

\begin{figure}[!ht]
    \centering
     \includegraphics[width=0.3\linewidth]{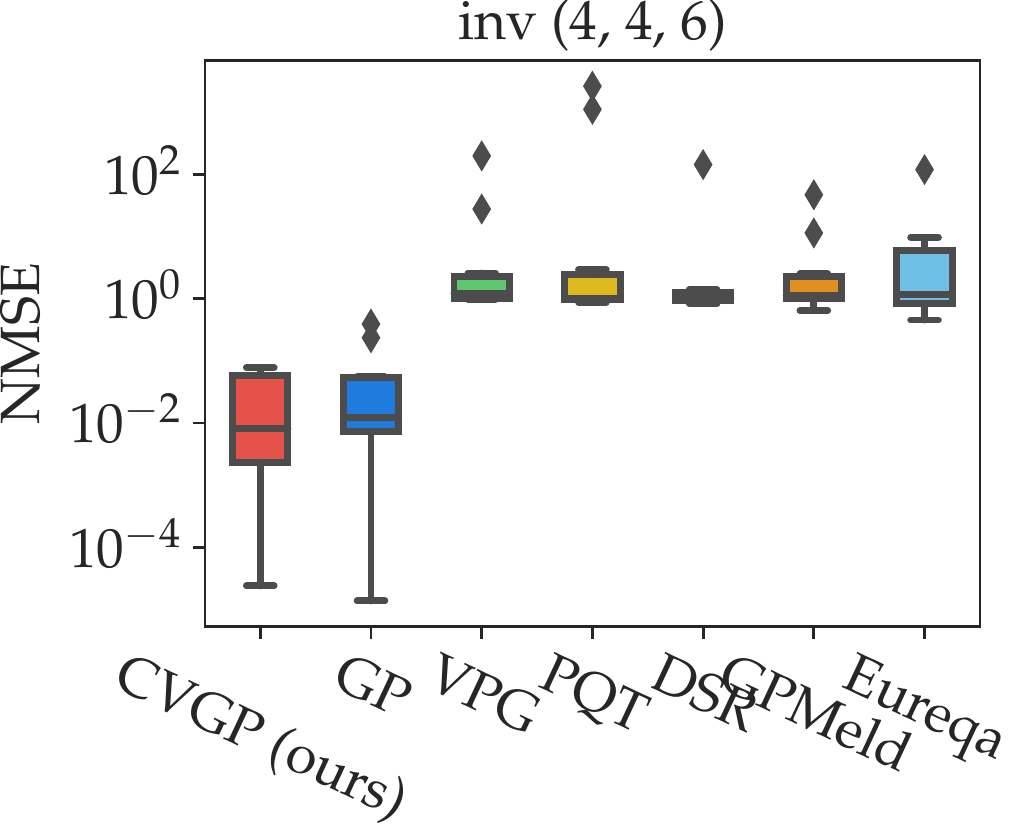}
    \includegraphics[width=0.3\linewidth]{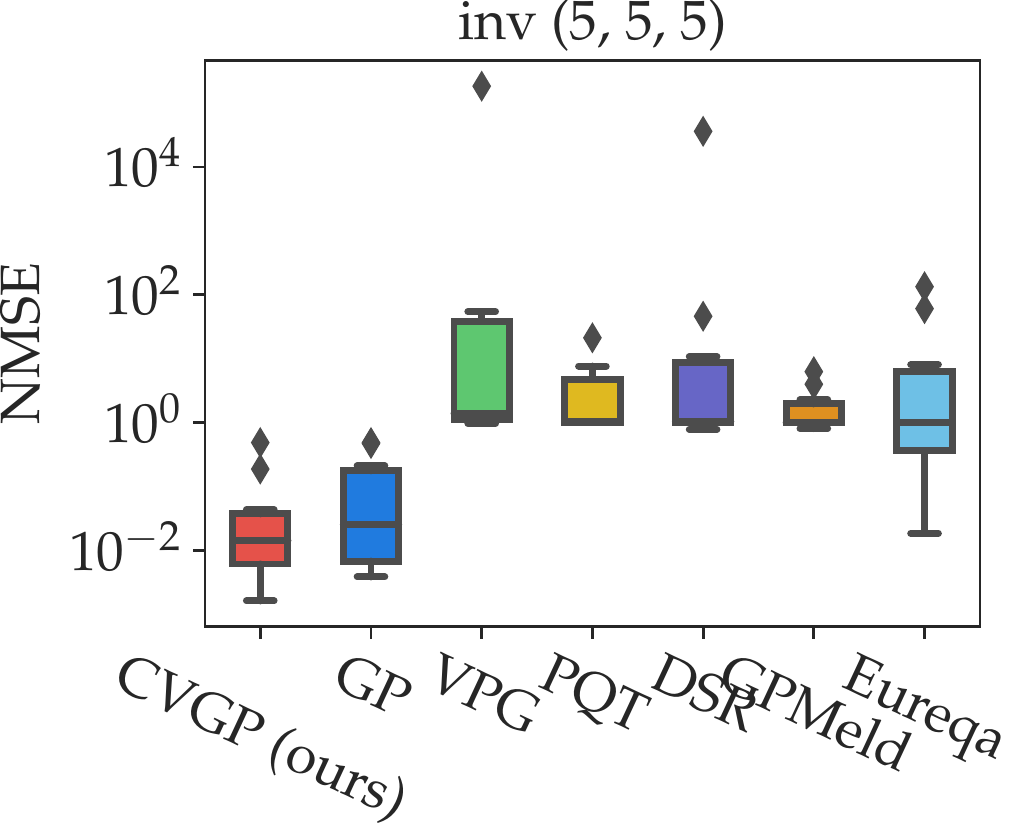}
    \includegraphics[width=0.3\linewidth]{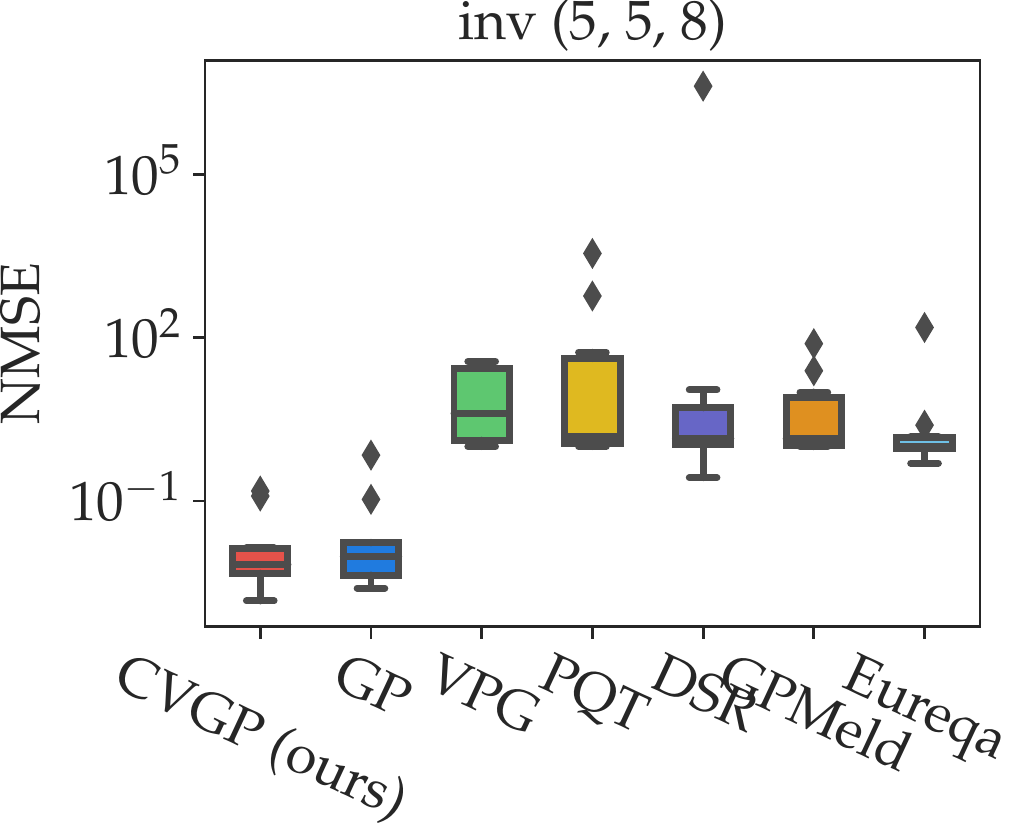}
    \includegraphics[width=0.3\linewidth]{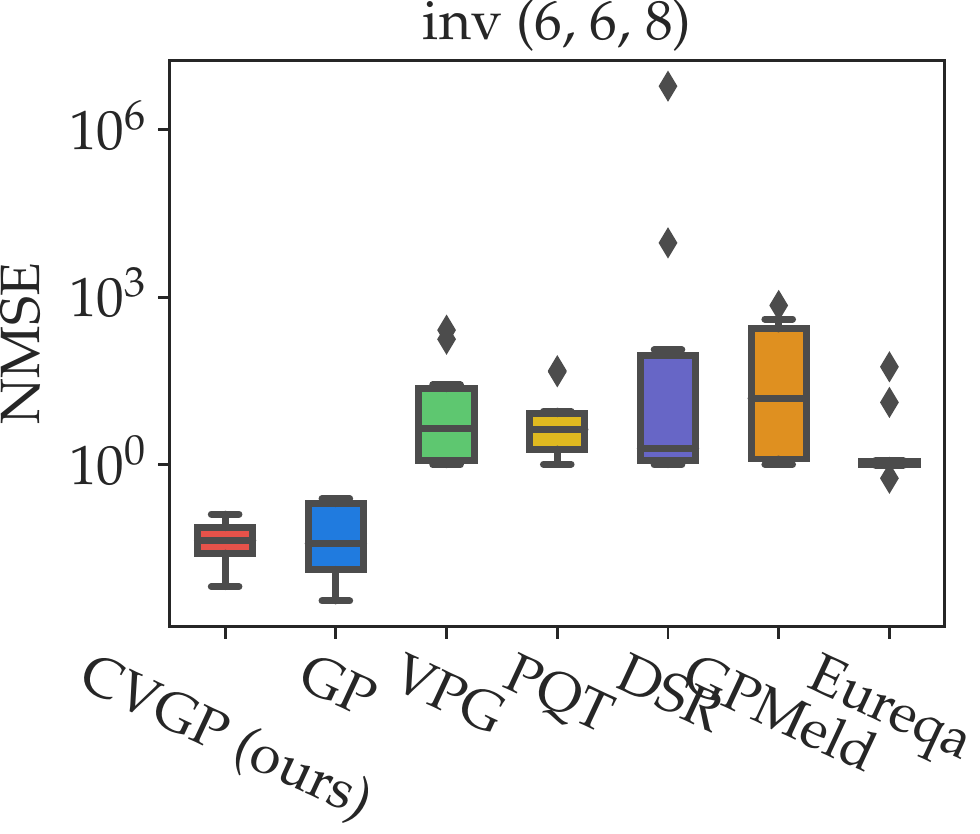}
    \includegraphics[width=0.3\linewidth]{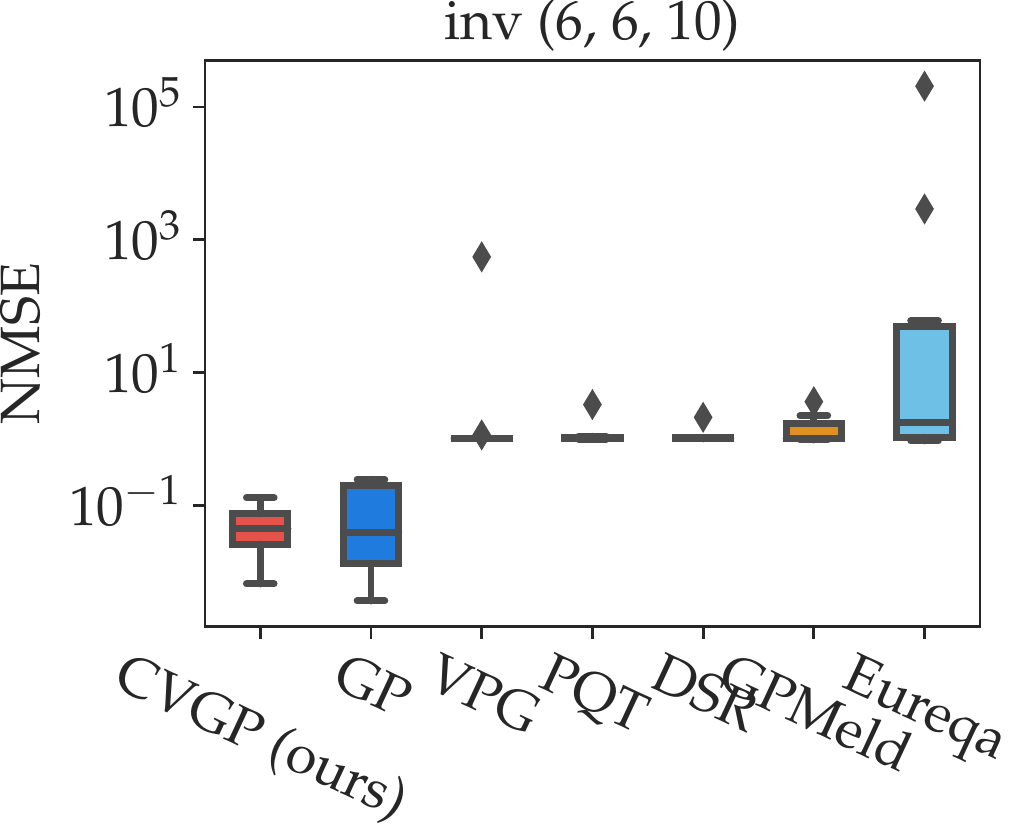}
    \includegraphics[width=0.3\linewidth]{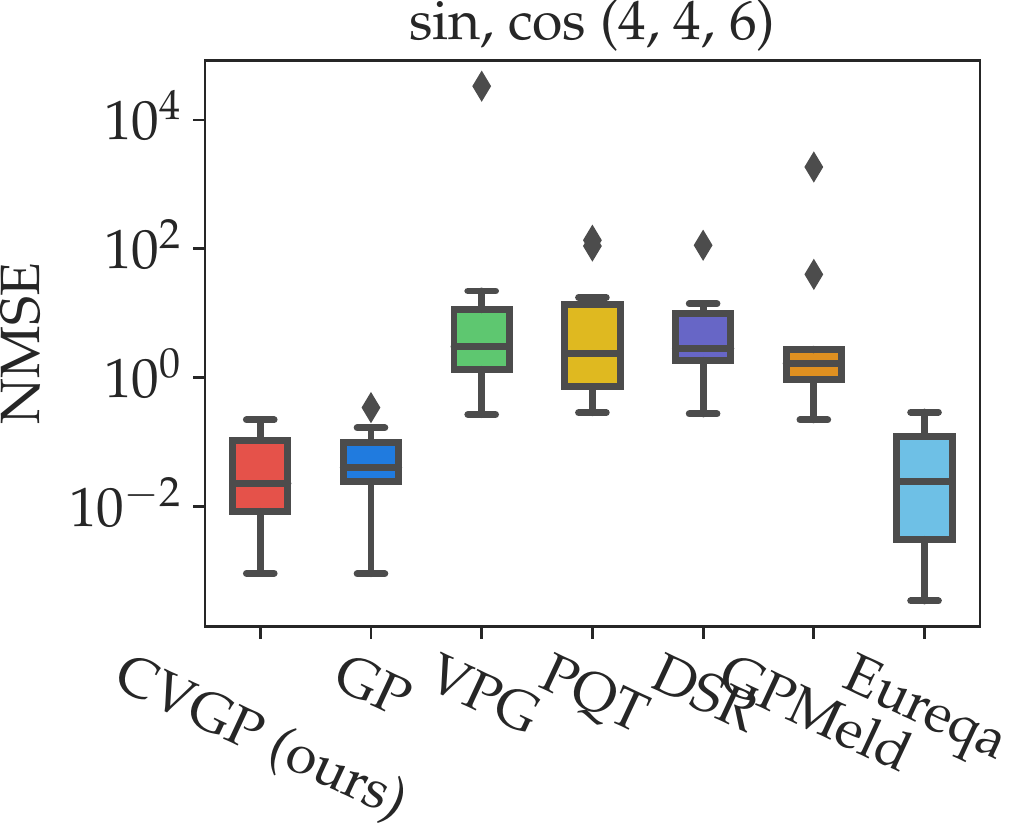}
    \includegraphics[width=0.3\linewidth]{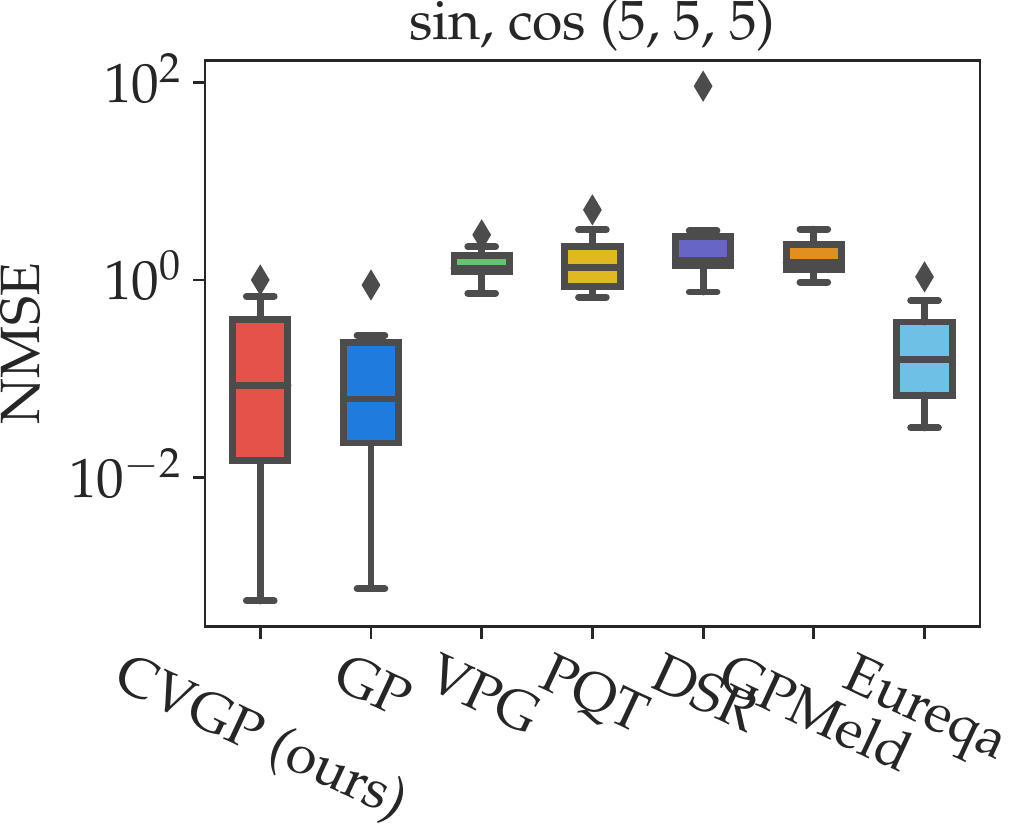}
    \includegraphics[width=0.3\linewidth]{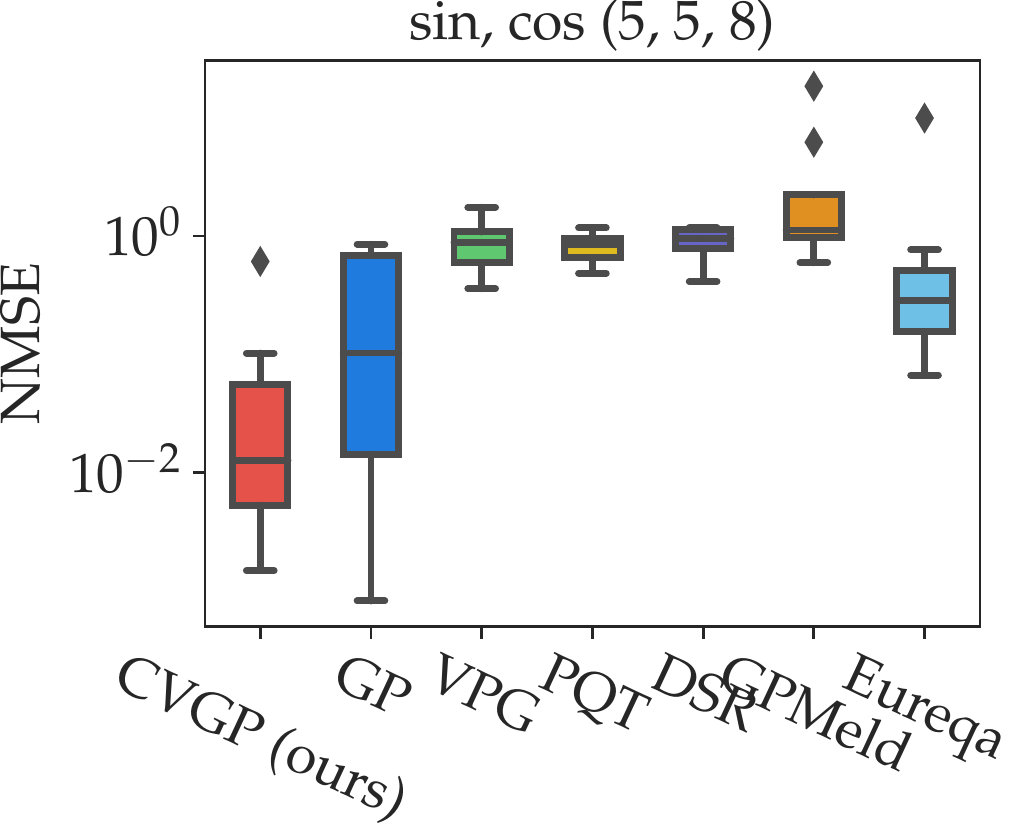}
    \includegraphics[width=0.3\linewidth]{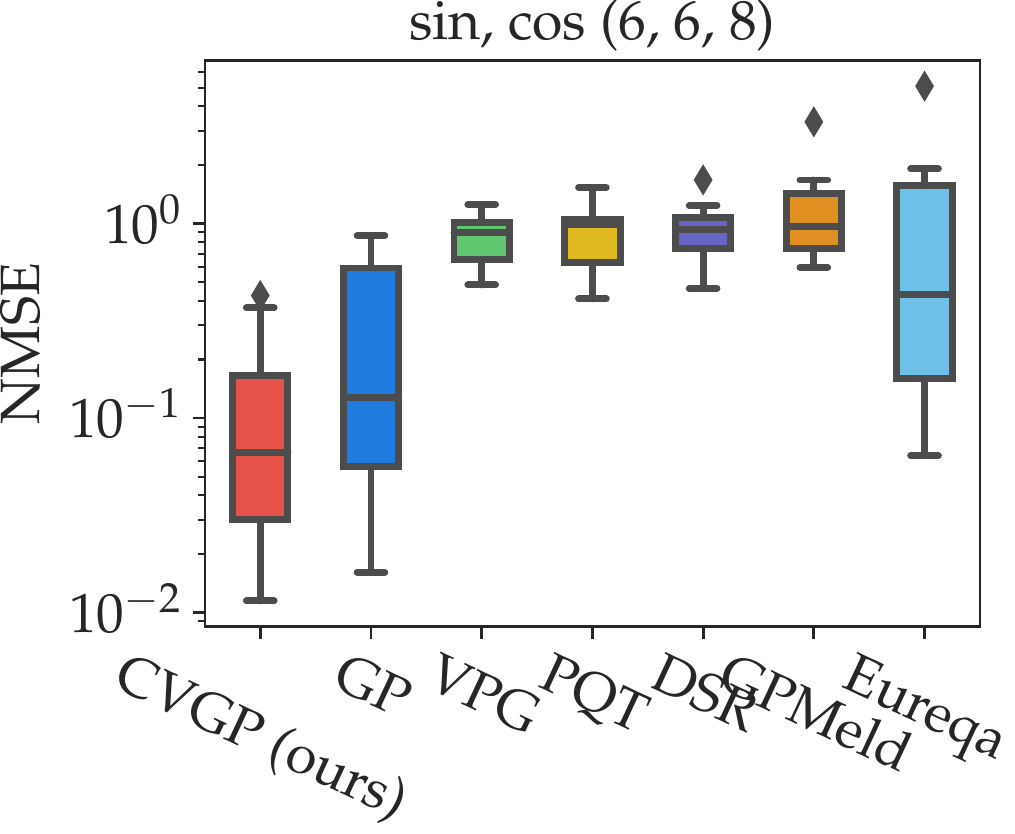}
    \includegraphics[width=0.3\linewidth]{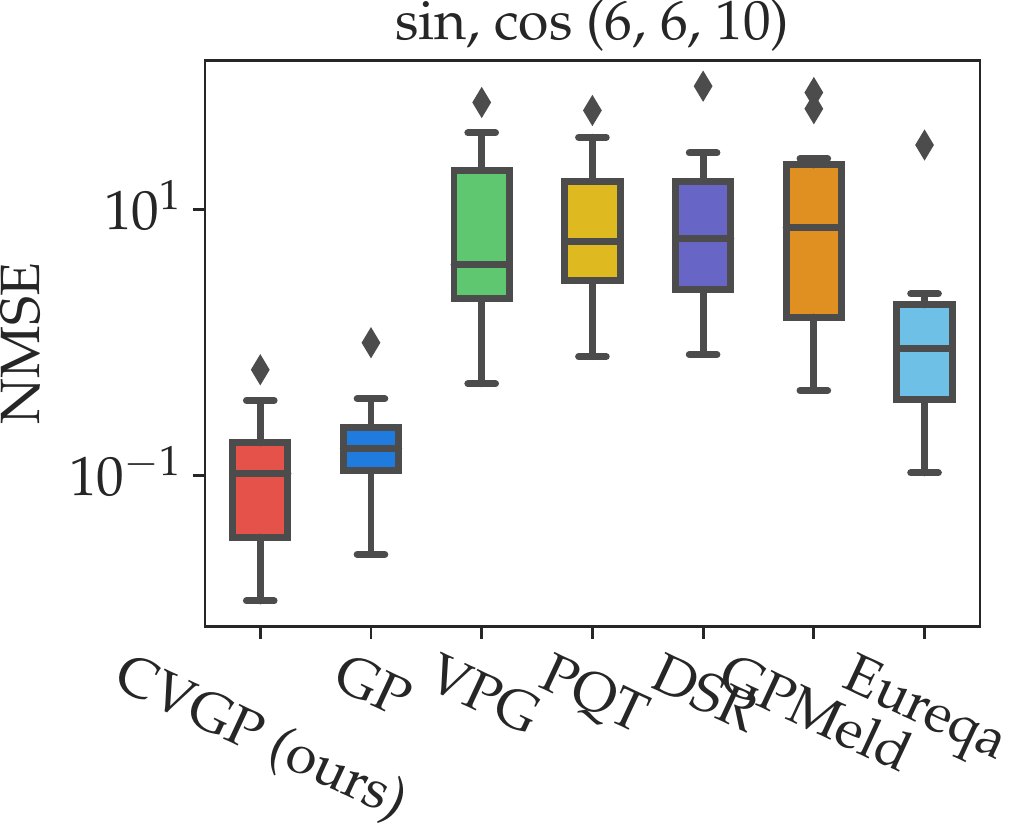}
    \includegraphics[width=0.3\linewidth]{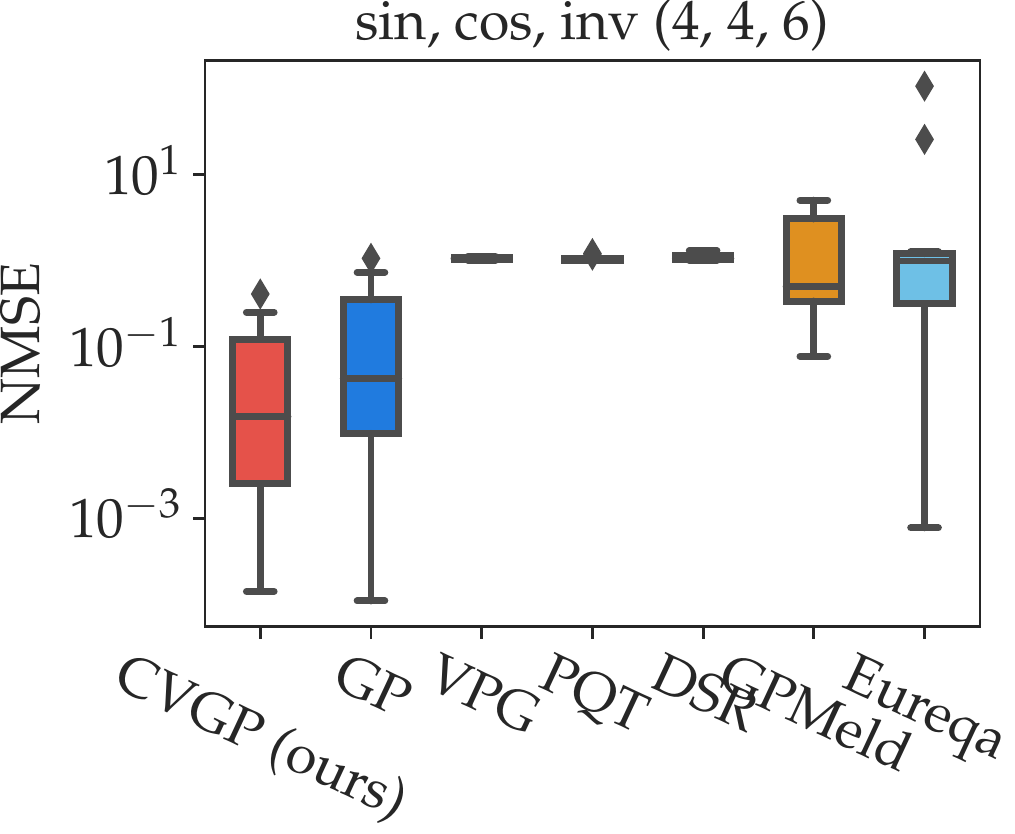}
    \includegraphics[width=0.3\linewidth]{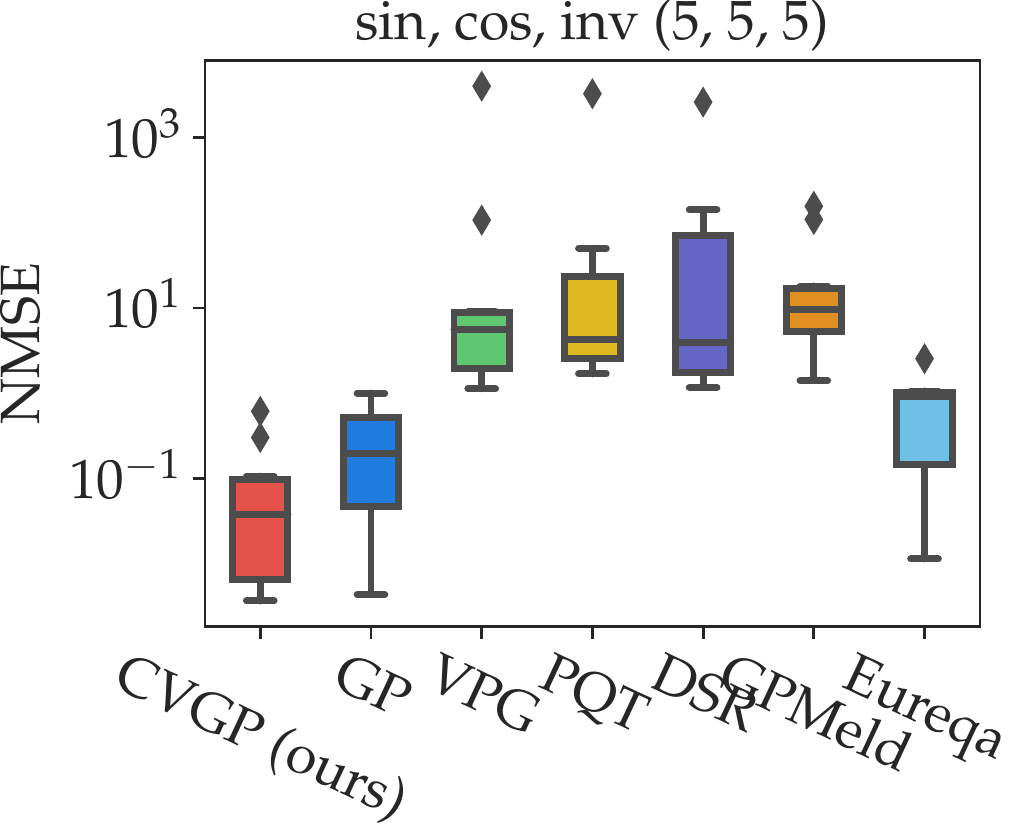}
    \includegraphics[width=0.3\linewidth]{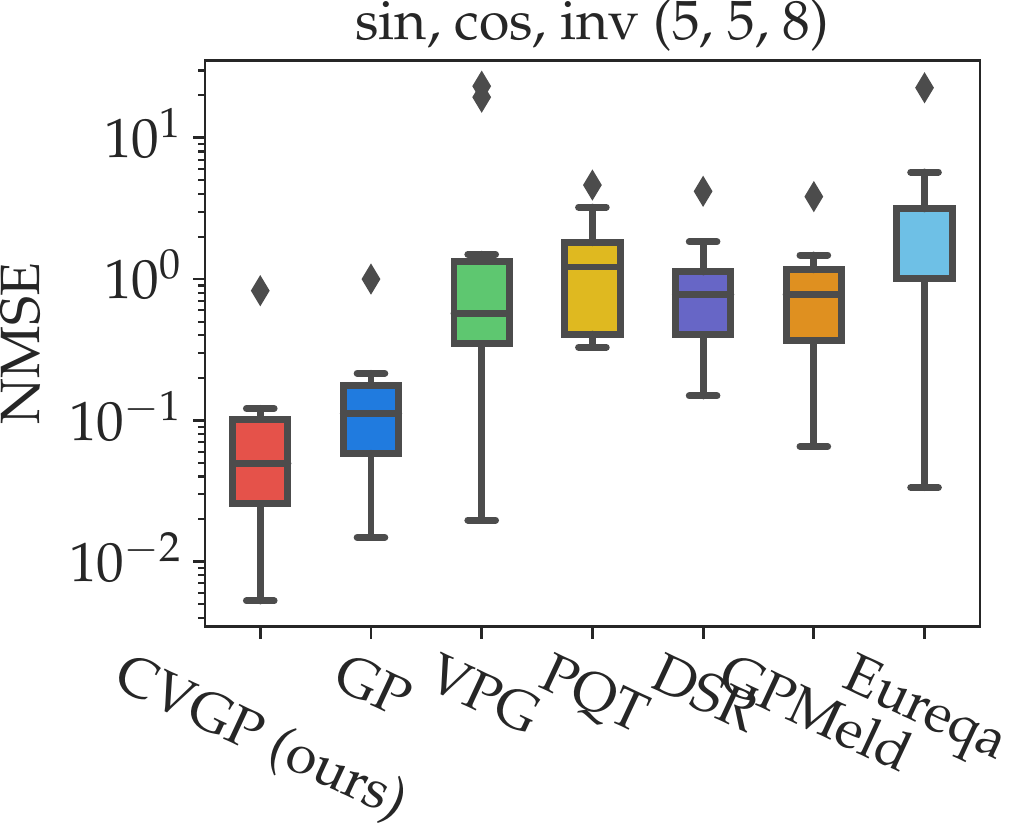}
    \includegraphics[width=0.3\linewidth]{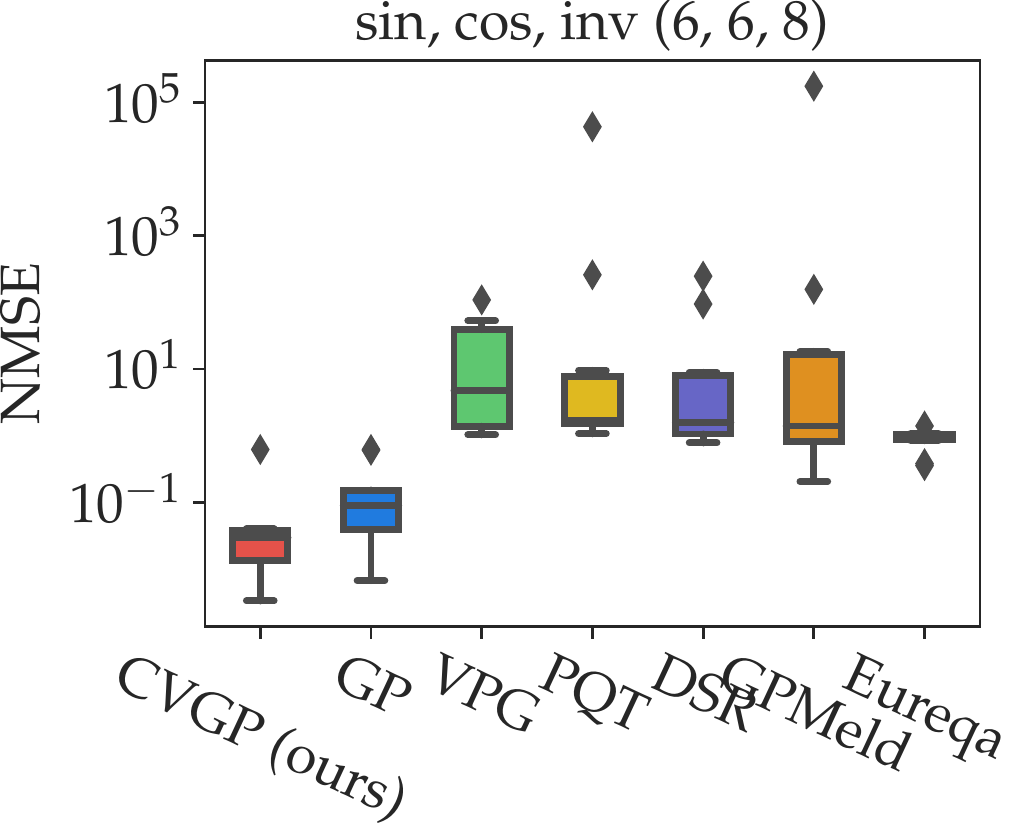}
    \includegraphics[width=0.3\linewidth]{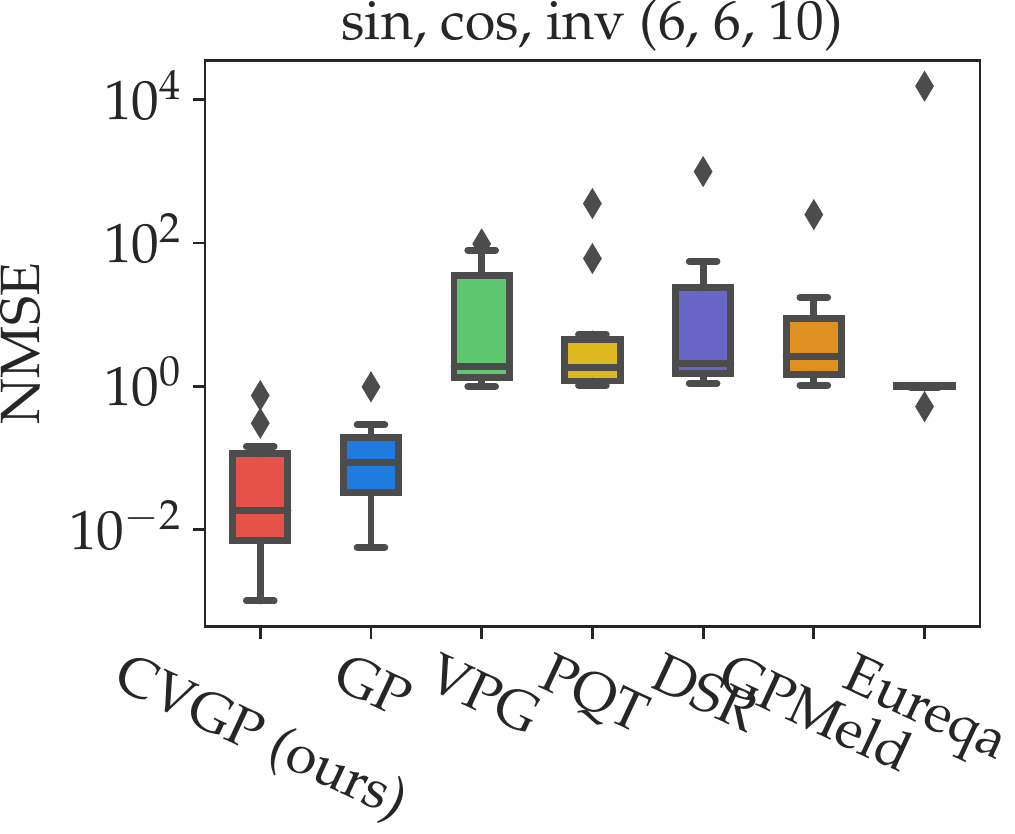}
    \caption{Quartiles of NMSE values of all the methods over several \textit{noiseless} datasets. Our \method shows a consistent improvement over all the baselines considered, among all the datasets. }
    \label{fig:Quartile-nmse-noiseless-full}
\end{figure}

\begin{figure}[!ht]
    \centering
     \includegraphics[width=0.3\linewidth]{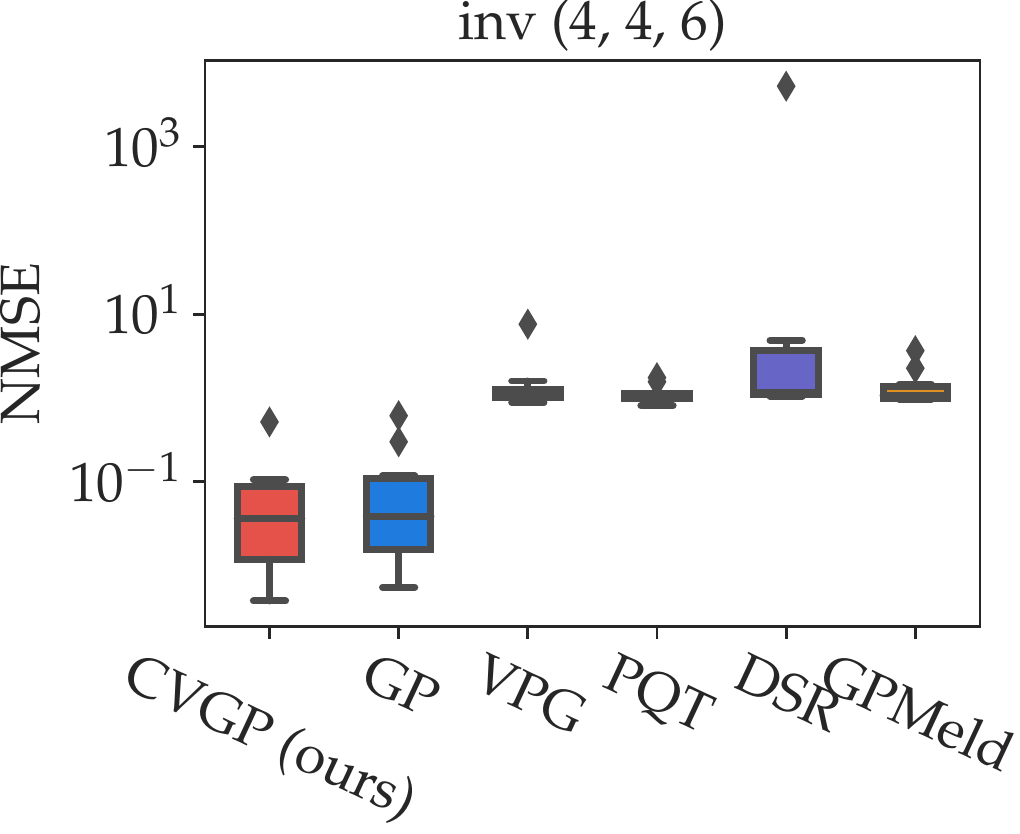}
    \includegraphics[width=0.3\linewidth]{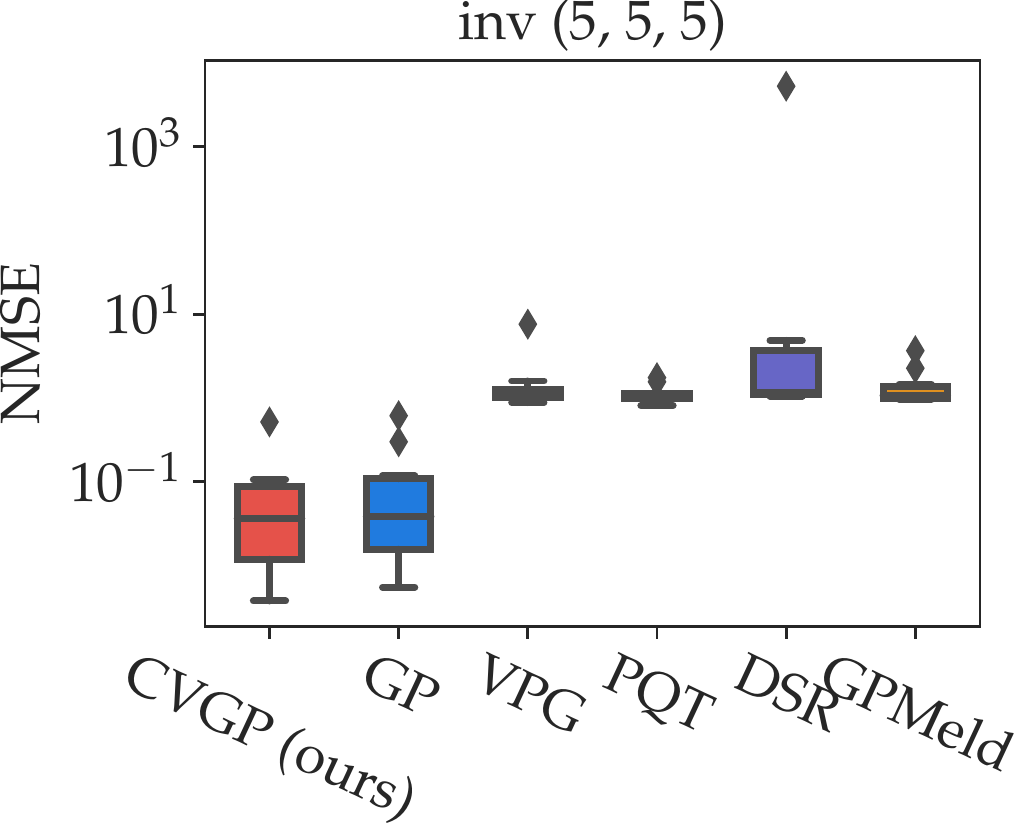}
    \includegraphics[width=0.3\linewidth]{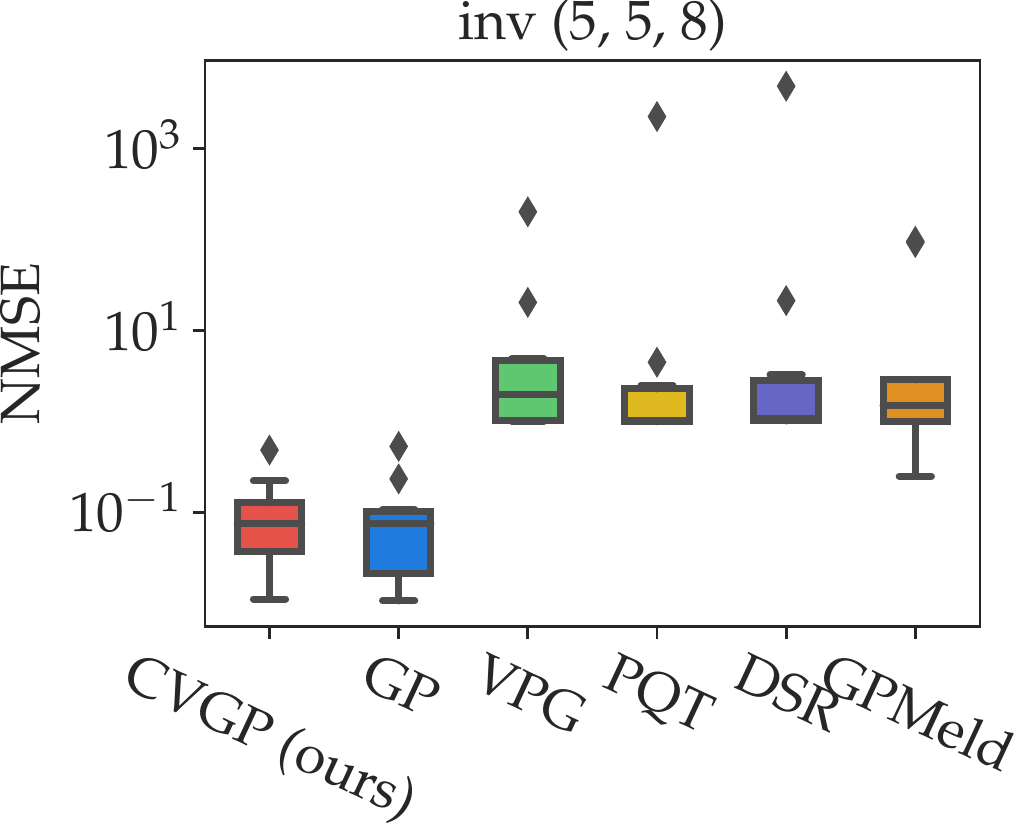}
    \includegraphics[width=0.3\linewidth]{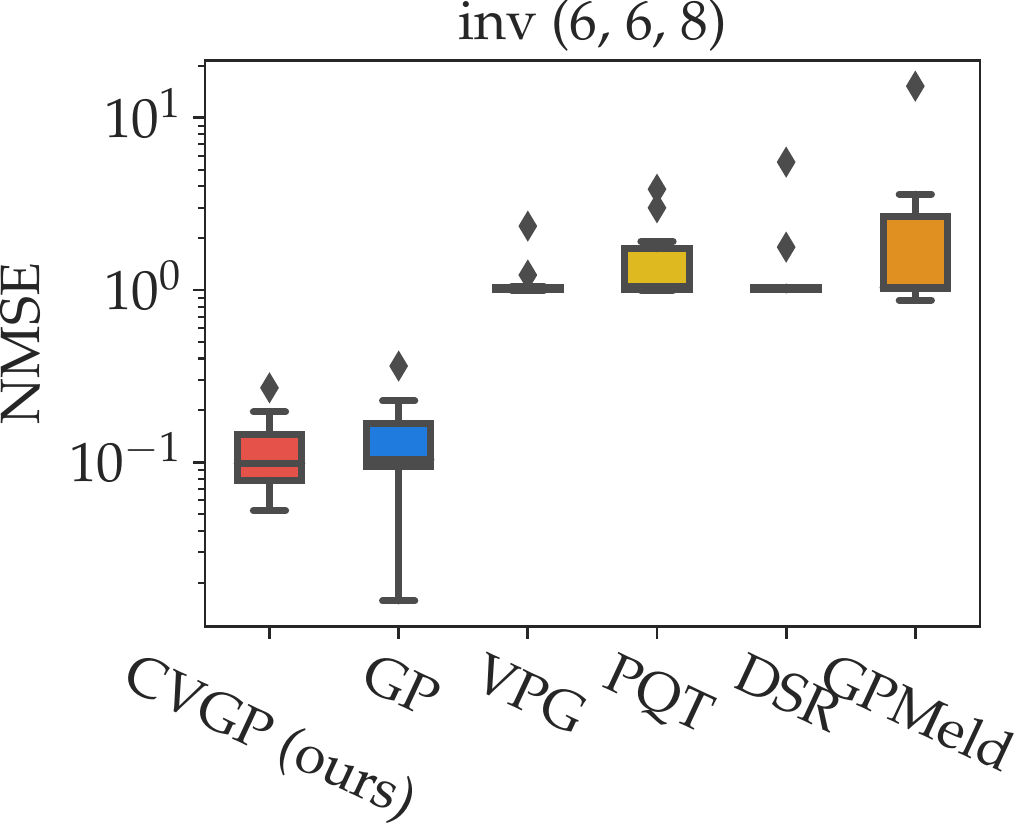}
    \includegraphics[width=0.3\linewidth]{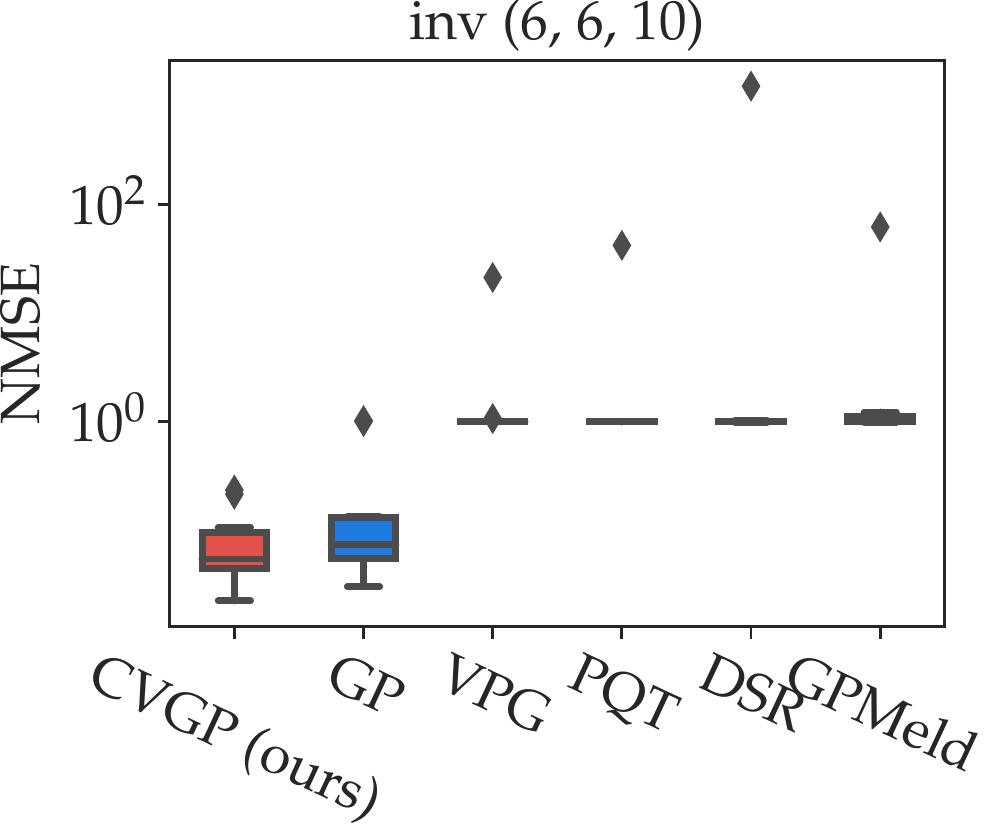}
    \includegraphics[width=0.3\linewidth]{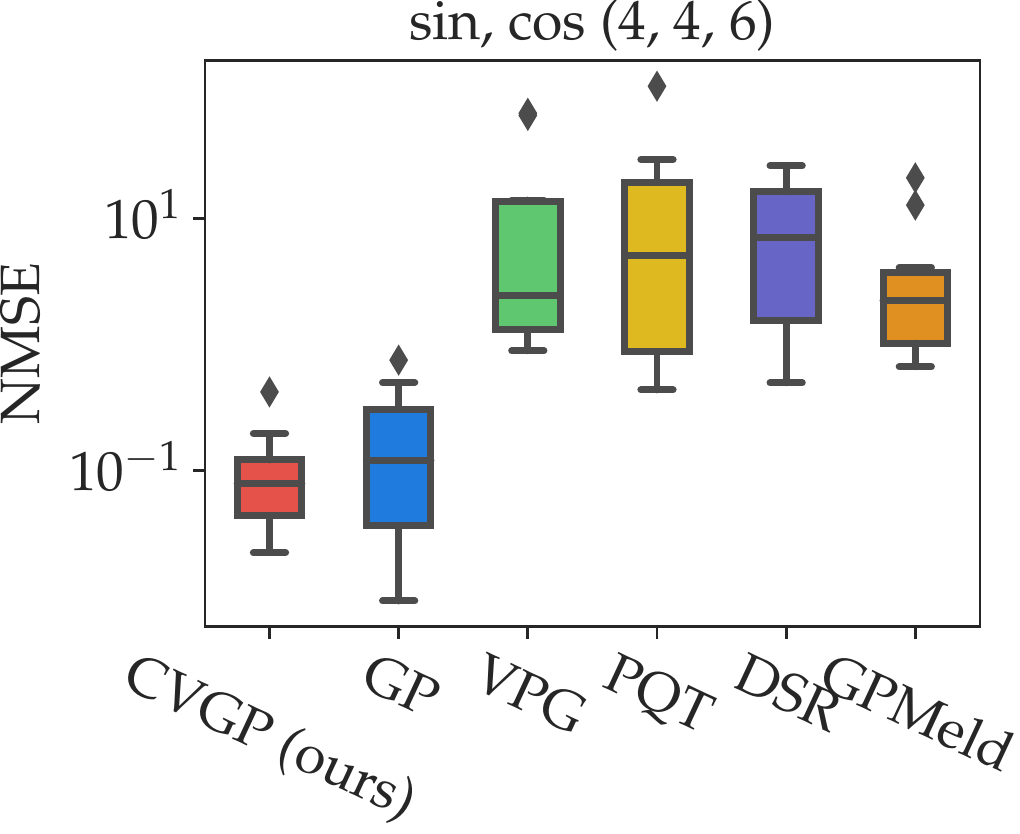}
    \includegraphics[width=0.3\linewidth]{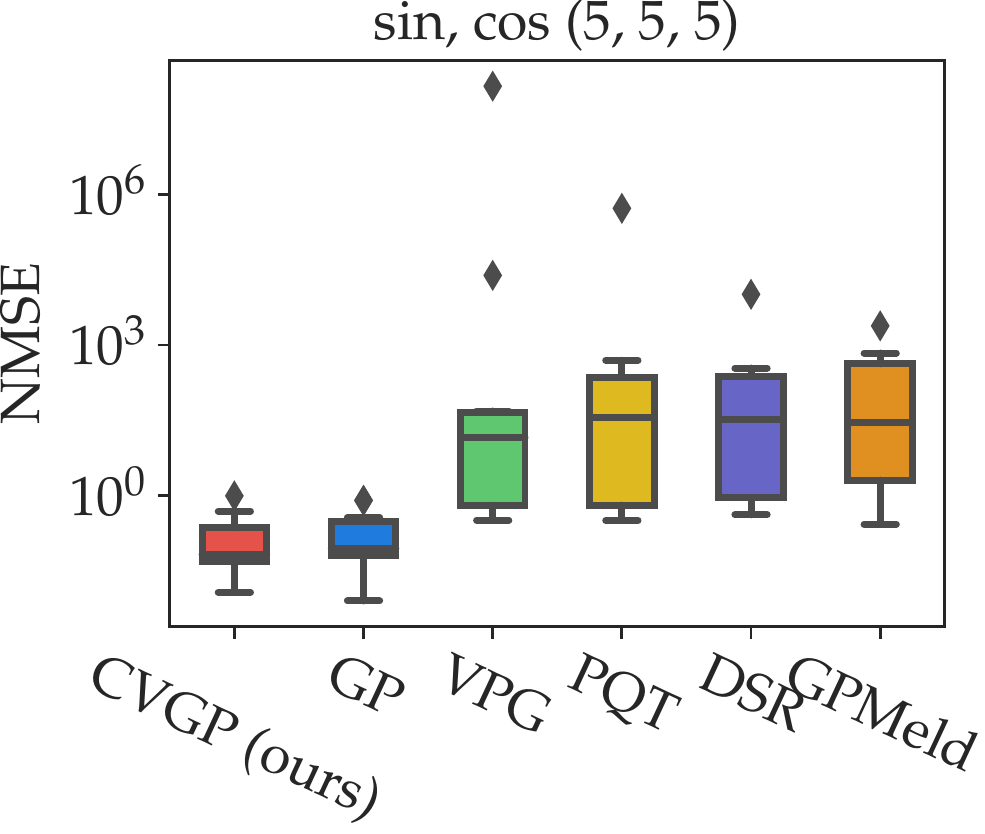}
    \includegraphics[width=0.3\linewidth]{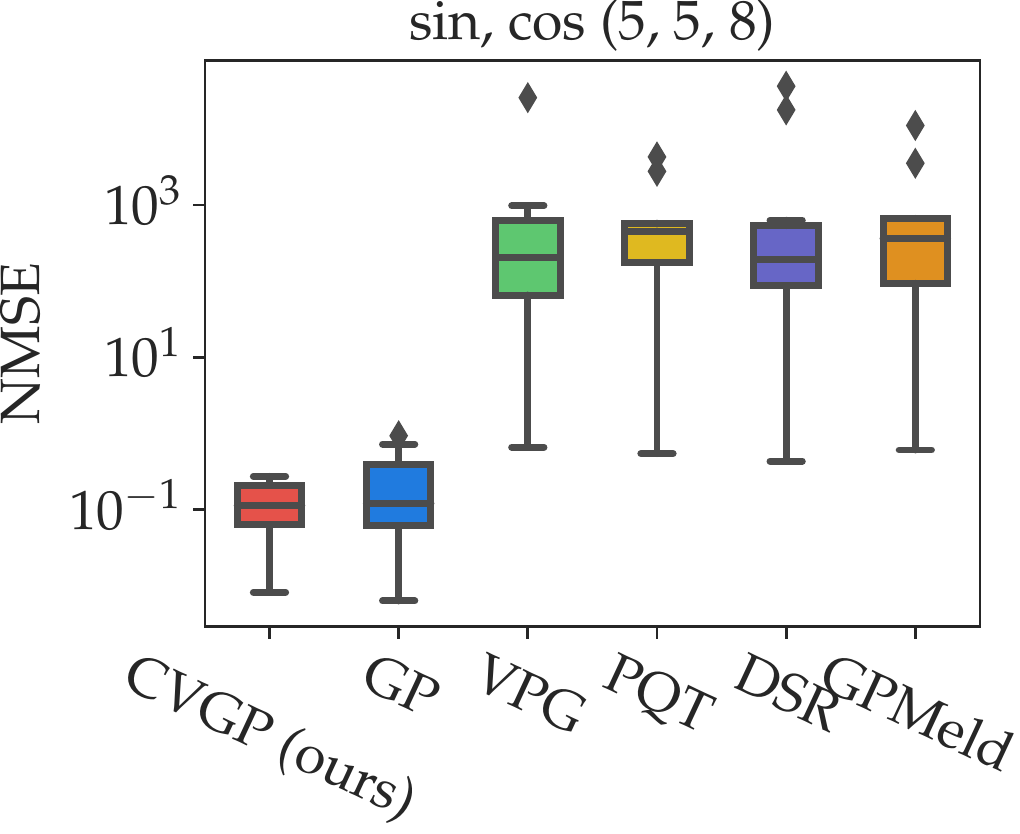}
    \includegraphics[width=0.3\linewidth]{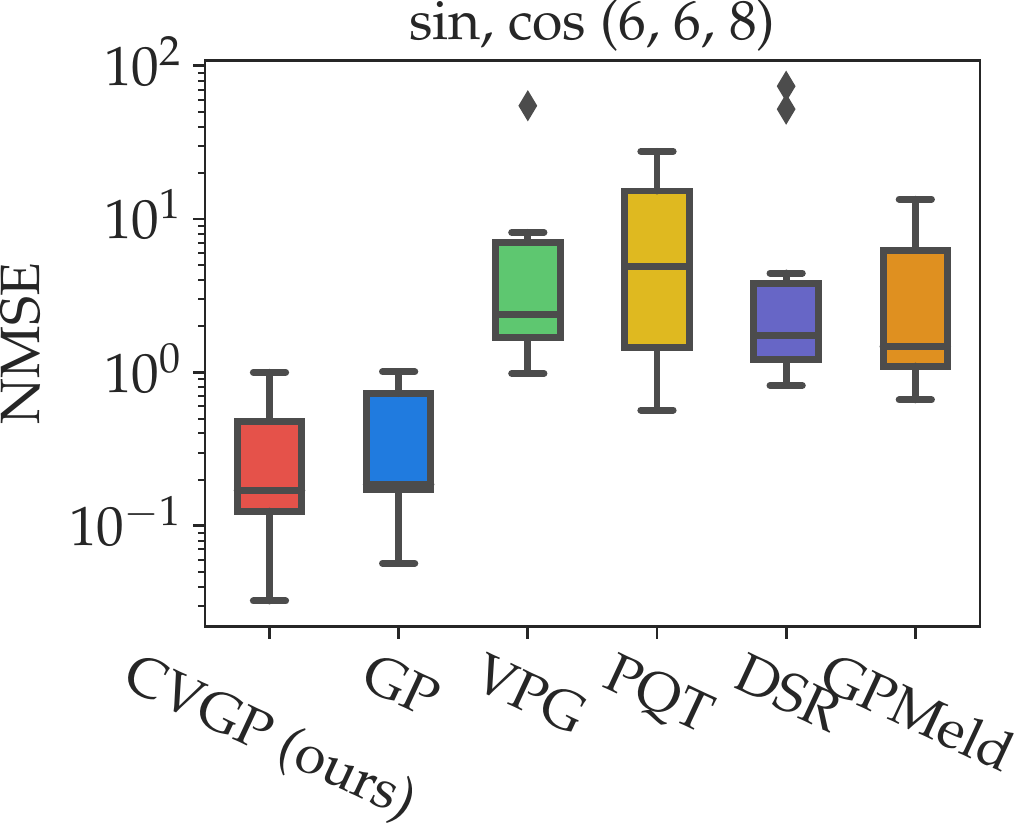}
    \includegraphics[width=0.3\linewidth]{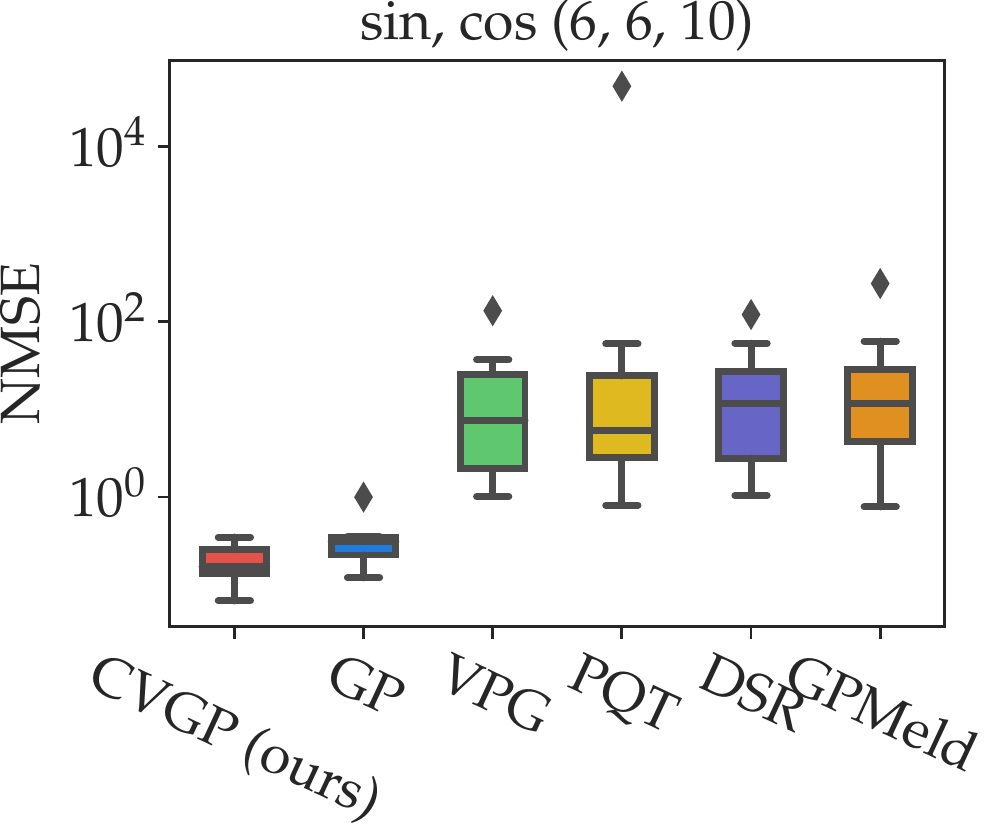}
    \includegraphics[width=0.3\linewidth]{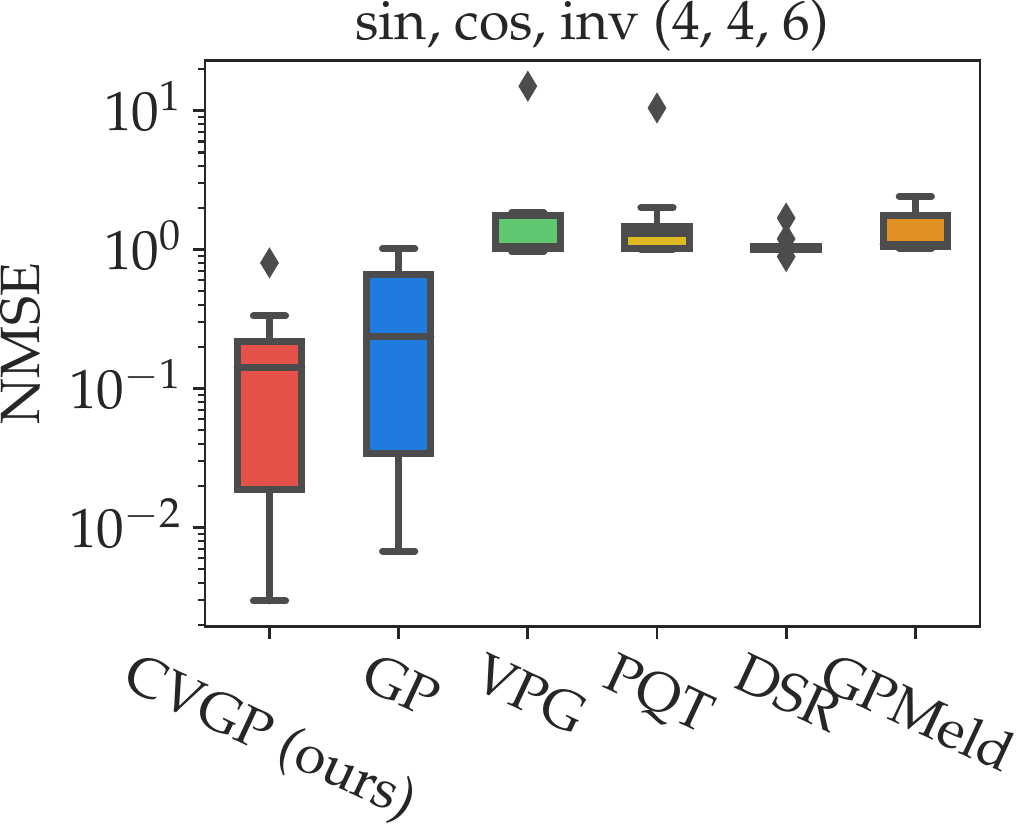}
    \includegraphics[width=0.3\linewidth]{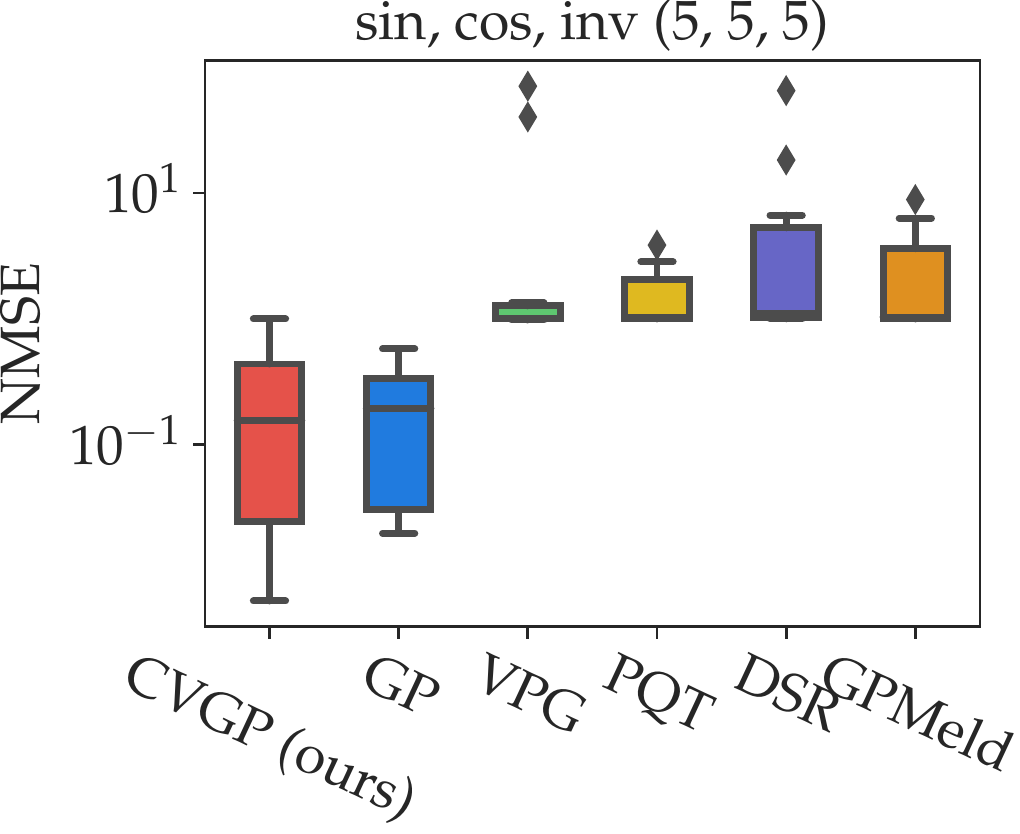}
    \includegraphics[width=0.3\linewidth]{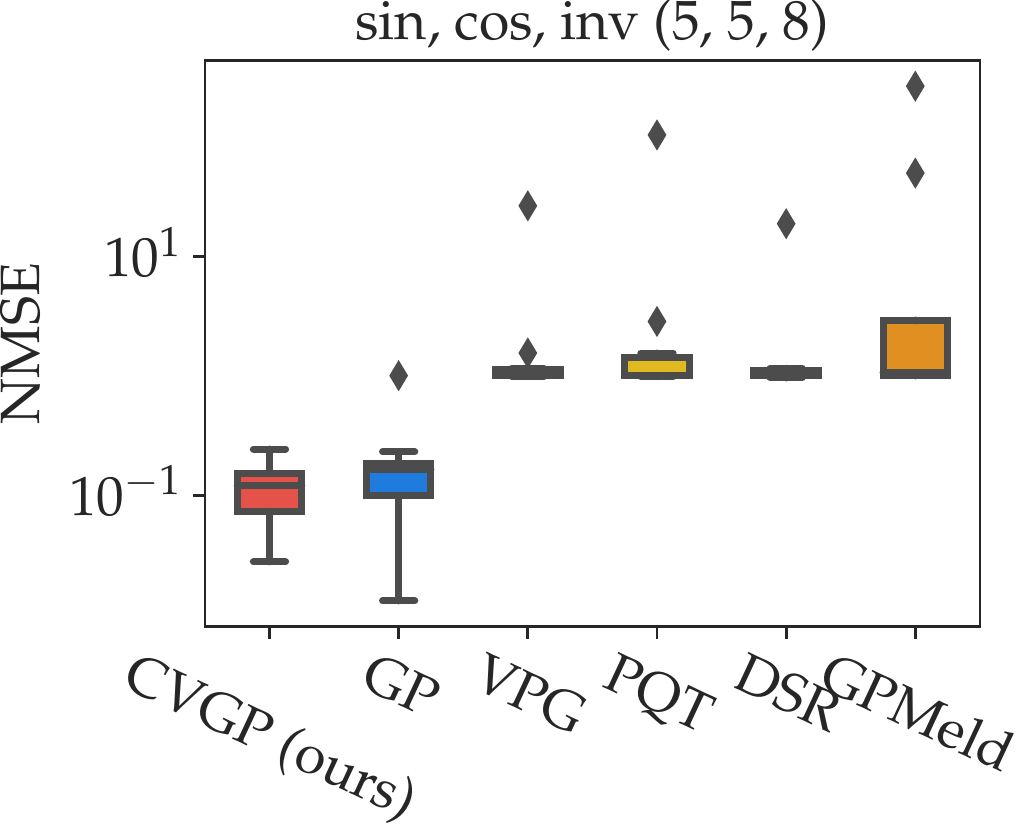}
    \includegraphics[width=0.3\linewidth]{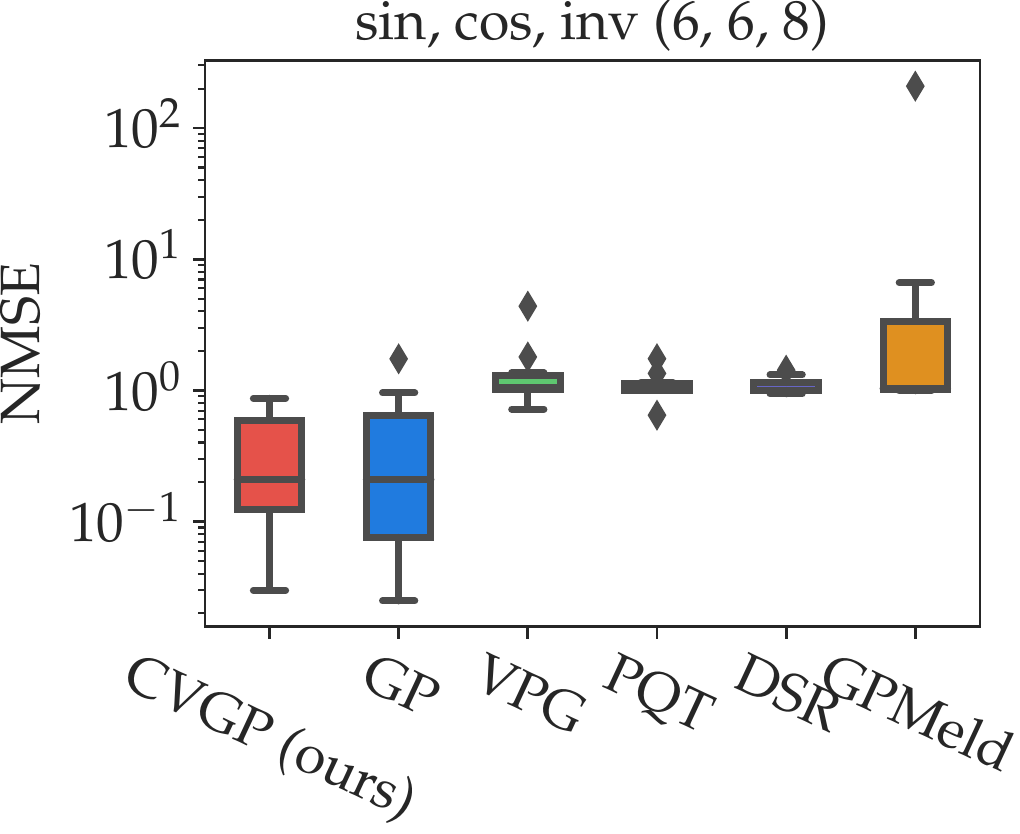}
    \includegraphics[width=0.3\linewidth]{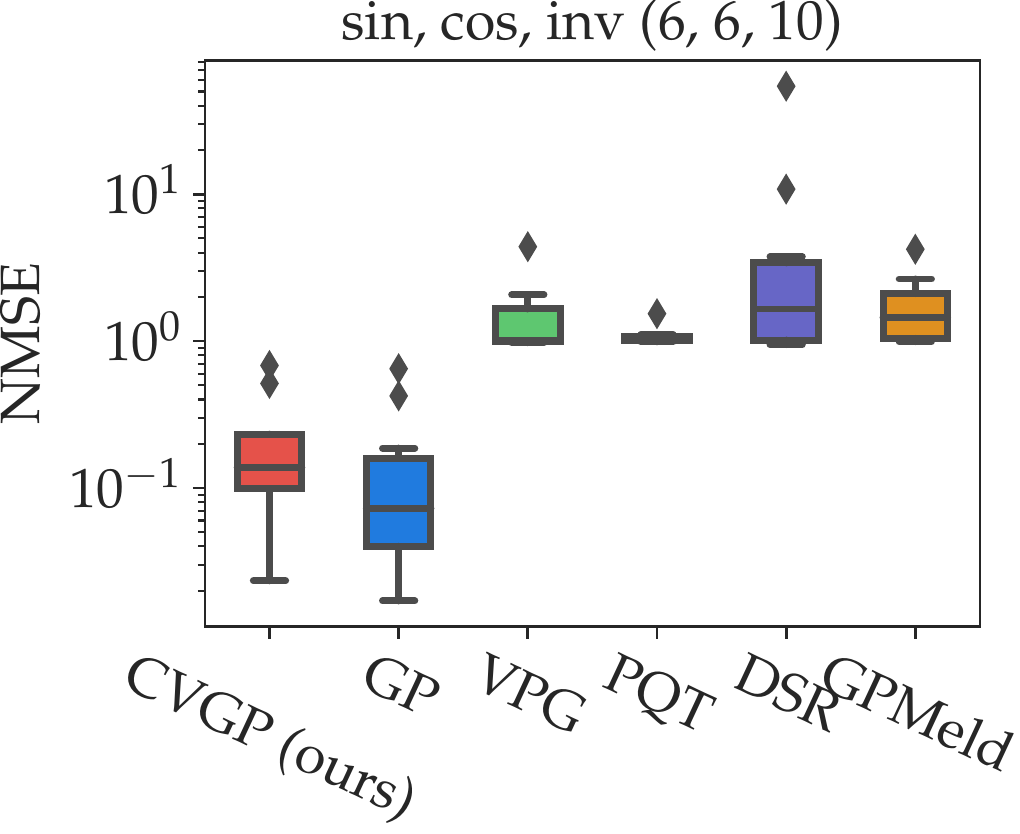}
    \caption{Quartiles of NMSE values of all the methods over several \textit{noisy} datasets. Our \method shows a consistent improvement over all the baselines considered, among all the datasets. }
    \label{fig:Quartile-nmse-noisy-full}
\end{figure}

\noindent\textbf{Quantile of NMSE Metric.} In Figure~\ref{fig:Quartile-nmse-noiseless-full}, we show the five-number summary for the NMSE metric in Table~\ref{tab:summary-nmse}, \textit{i.e.}, the minimum, 25\% quartile, median, 75\% quartile, and maximum. In Figure~\ref{fig:Quartile-nmse-noisy-full}, we show the five-number summary for the NMSE metric in Table~\ref{tab:summary-noisy}. For the datasets considered, our \method shows a consistent improvement over all the baselines.

\end{document}